\documentclass{article} 
\usepackage[final]{colm2026_conference}

\DeclareFontFamily{OMS}{zi4}{}
\DeclareFontShape{OMS}{zi4}{m}{n}{<->ssub*cmsy/m/n}{}
\DeclareFontShape{OMS}{zi4}{b}{n}{<->ssub*cmsy/b/n}{}

\usepackage{microtype}
\usepackage{hyperref}
\usepackage{url}
\usepackage{booktabs}

\usepackage{amsmath,amssymb,amsthm}
\usepackage{algorithm,algpseudocode}
\newcommand{\ind}{\mathbf{1}}
\newtheorem{lemma}{Lemma}
\DeclareMathOperator*{\argmax}{arg\,max}
\usepackage{multirow}
\usepackage{graphicx} 
\usepackage{tcolorbox}
\usepackage{placeins}
\usepackage{lineno}

\definecolor{darkblue}{rgb}{0, 0, 0.5}
\hypersetup{colorlinks=true, citecolor=darkblue, linkcolor=darkblue, urlcolor=darkblue}

\title{Judge, Retrieve, or Abstain: Uncertainty-Guarded LLM Judging with Provable Risk Guarantees}

\author{%
\textbf{Sher Badshah}$^{1,2}$, \textbf{Ali Emami}$^{3}$, \textbf{Hassan Sajjad}$^{1,2}$ \\
$^{1}$Dalhousie University \quad $^{2}$New York University Abu Dhabi \quad $^{3}$Emory University \\
\texttt{\{sh545346,hsajjad\}@dal.ca} \\
\texttt{ali.emami@emory.edu}%
}

\begin{document}

\ifcolmsubmission
\linenumbers
\fi

\maketitle

\begin{abstract}
Using LLMs as judges has become standard practice for evaluating model outputs at scale. This is particularly common for subjective, open-ended tasks such as assessing helpfulness or alignment, where no single reference answer exists. However, objective tasks introduce a distinct reliability challenge for reference-free LLM judging. In the absence of a reference answer, the judge evaluates factual correctness either through its parametric knowledge or through tool augmentation. Although the former enables efficient evaluation, the judge may hallucinate or lack sufficient evidence for its verdict. Conversely, tool augmentation can provide additional evidence but introduces extra computational cost and requires an appropriate mechanism to determine when and how that evidence should be used reliably. More importantly, neither approach alone provides formal control over the risk of accepted verdicts or guarantees their reliability at a specified level. We propose a risk-controlled framework that calibrates uncertainty thresholds on a held-out set so that the false discovery rate among accepted verdicts remains below a user-specified level~$\alpha$ with high probability, using finite-sample Clopper--Pearson intervals. When the parametric mode is not sufficiently confident, the instance is routed to a retrieval-augmented mode, where the judge gathers web evidence and re-evaluates the instance under a second calibrated threshold. The finite-sample guarantee carries over to this two-threshold routing without additional assumptions. Across open-domain QA benchmarks and judges of varying scales, the framework maintains the target error rate while achieving substantially higher coverage than single-mode baselines.
\end{abstract}

\section{Introduction}
\label{sec:intro}
LLM-as-a-judge has emerged as a scalable alternative to human evaluation for assessing language model outputs~\citep{NEURIPS2023_91f18a12}. It enables practitioners to evaluate thousands of model outputs at a fraction of the cost and time required for human annotation~\citep{gu2024survey, tan2025judgebenchbench}. However, most existing work employs LLM judges for subjective evaluation~\citep{badshah-clev}, such as scoring outputs for helpfulness, harmlessness, or style in reward modeling, where no single correct answer exists and approximate agreement suffices~\citep{yuan-etal-2024-r, liu-etal-2023-g}. Extending LLM-as-a-judge to objective, factual evaluation remains an open challenge. 

In factual and open-domain QA, answers are either correct or incorrect, and the judge must make this determination reliably. However, obtaining ground-truth annotations for this purpose is expensive at scale~\citep{chianglee2023large, manas2024improving}, requires domain expertise for specialized questions, and is entirely infeasible when LLMs operate as autonomous agents answering open-ended queries in real-world environments where no pre-collected labels exist. LLM-as-a-judge offers a reference-free alternative that eliminates this dependency. Nevertheless, utilizing LLM judges to assess correctness without explicit references introduces two key reliability challenges~\citep{schroeder2025trustllm, badshah2025ref}.

First, LLM judges are bounded by their training data and may lack knowledge of recent events, rare entities, or specialized domains~\citep{10.1162/tacl_a_00754}. Such limitations can prevent reliable distinction between correct answers and plausible fabrications. Second, even if the judge has the relevant knowledge, hallucinations may still cause it to produce confident but incorrect verdicts~\citep{10.1145/3703155}. In both cases, the failures are silent as the judge provides no indication that its verdict is unreliable. Without a mechanism to quantify and act on this uncertainty, incorrect evaluations can propagate undetected into downstream decisions.

Selective evaluation provides a natural mitigation strategy by allowing the judge to abstain when uncertain and produce a verdict only when it is likely to be correct. Uncertainty quantification methods can identify unreliable verdicts~\citep{10.1145/3744238}. However, applying a heuristic threshold to these scores provides no formal guarantee on the error rate among accepted evaluations~\citep{scope}. Recent work on risk-controlled selective prediction addresses this gap by calibrating statistically valid thresholds that bound the False Discovery Rate (FDR) among selected outputs with high probability~\citep{safer, wang2025coin}. However, existing methods are designed to filter examinee answers in Question-Answering (QA) tasks rather than control errors in a meta-evaluation setting. Moreover, earlier methods treat abstention as the only recourse for uncertain instances. This discards cases where the judge's uncertainty stems from a knowledge gap or hallucination that could be resolved with additional evidence~\citep{tale_sb}.

In this paper, we propose a risk-controlled selective evaluation framework for LLM-as-a-judge that addresses the above limitations. Inspired by LLM-based agents that leverage external tools to overcome their parametric limitations, we equip the judge with web search as a tool to bridge its knowledge gaps. Our framework operates in two modes. In Mode~1, the judge model evaluates using only its parametric knowledge without any tool-augmentation. If the judge is confident, the verdict is accepted. If uncertain, the instance is routed to Mode~2 where the judge retrieves relevant evidence from the web and re-evaluates with this augmented context. If the judge remains uncertain after retrieval, it abstains and flags the instance for human review. Both thresholds are jointly calibrated on a held-out set with known ground-truth labels via Clopper--Pearson upper confidence bounds~\citep{be7c0fd0_cp}. We prove that the finite-sample FDR guarantee for a fixed threshold pair extends to the two-threshold routing setting without additional distributional assumptions. Our contributions are:
\begin{itemize}
\item We formulate risk-controlled selective evaluation for LLM-as-a-judge on factual QA.
\item We introduce a two-mode routing mechanism that equips the judge with adaptive web retrieval to resolve uncertainty before resorting to abstention.
\item We show that the Bernoulli structure required for the Clopper--Pearson bound extends to the two-threshold routing setting.
\item Experiments across open-domain QA benchmarks and judges of varying scales demonstrate that adaptive retrieval recovers substantial coverage over single-mode baselines while controlling the judge's error rate.
\end{itemize}
\section{Related Work}
\label{sec:related}

\paragraph{LLM-as-a-Judge.}
Using LLMs to evaluate the outputs of other models has gained widespread adoption as a cost-effective alternative to human annotation. Strong LLMs can approximate human preferences on subjective dimensions~\citep{liu-etal-2023-g}. As a result, they have been adopted for reward modeling and reinforcement learning~\citep{son2024llmasajudgerewardmodel}. More recently, LLM judges have been extended to objective tasks such as factual evaluation. SAFE~\citep{NEURIPS2024_937ae0e8} uses an LLM agent with web search to verify individual facts in long-form responses. Similarly, SAGE~\citep{tale_sb} employs a tool-augmented agent that iteratively retrieves and synthesizes external evidence for reference-free QA evaluation. These works improve judge accuracy but provide no formal guarantee on the reliability of verdicts.

\paragraph{Uncertainty quantification for LLM evaluation.}
Existing methods generally fall into two categories. White-box methods estimate uncertainty from the judge's token probabilities. The most common choice is the predictive entropy of the output distribution~\citep{malinin2021uncertainty}. These probabilities are often well calibrated with respect to correctness~\citep{kadavath2022languagemodelsmostlyknow}. Black-box methods, in contrast, estimate uncertainty by querying the judge repeatedly and measuring how consistent its verdicts are~\citep{002WSLCNCZ23, wang-etal-2024-conu}. Both signals correlate with judge accuracy and can flag low-confidence verdicts~\citep{scope}. However, they remain heuristic. For instance, a threshold chosen by hand or tuned on a validation set gives no formal guarantee on the error rate among accepted verdicts.

\paragraph{Uncertainty-based routing and tool use.}
A parallel line of work uses uncertainty to determine when additional resources are needed. FrugalGPT~\citep{chen2023frugalgptuselargelanguage} and AutoMix~\citep{NEURIPS2024_ecda225c} use confidence to route queries across models with different capabilities, trading off cost against quality. In contrast, FLARE~\citep{jiang-etal-2023-active} uses token-level uncertainty to decide when to retrieve external information during generation. More recently, SAGE~\citep{tale_sb} introduced search-augmented retrieval for LLM judges, where the judge acts as an agent that gathers targeted evidence when evaluating free-form answers. However, these approaches often rely on unconditional retrieval, which introduces unnecessary computation. Furthermore, they do not provide formal guarantees on the error rate of the resulting evaluations.

\paragraph{Risk-controlled selective evaluation.}
Conformal prediction provides distribution-free, finite-sample coverage guarantees by constructing prediction sets calibrated on held-out data~\citep{angelopoulos2023gentle}. Conformal risk control generalizes this beyond coverage to bound arbitrary loss functions, including the FDR, at user-specified levels~\citep{angelopoulos2024conformal}. Recent work has applied these guarantees to question answering. SConU builds prediction sets that cover admissible answers with high probability~\citep{wang-etal-2025-sconu}. Similarly, COIN~\citep{wang2025coin} calibrates an uncertainty threshold that bounds the FDR among accepted answers using Clopper--Pearson upper confidence bounds~\citep{be7c0fd0_cp}. Closer to our setting, Trust or Escalate~\citep{jung2025trust} and SCOPE~\citep{scope} adapt risk-controlled selective prediction to pairwise evaluation and provide guarantee on human agreement among accepted judgments. However, both target subjective preference tasks where the judge compares two responses rather than evaluating factual correctness (i.e., objective evaluation).

Our framework introduces risk-controlled retrieval-augmented evaluation, where retrieval is selectively activated based on calibrated uncertainty thresholds. Unlike unconditional retrieval approaches, it provides formal FDR guarantees for the accepted verdicts.
\section{Methodology}
\label{sec:method}

We propose a risk-controlled framework for LLM-as-a-judge evaluation. The framework calibrates uncertainty thresholds on a held-out set with human labels so that, among all instances the judge evaluates, the probability of producing an incorrect verdict is bounded by a user-specified risk level $\alpha$~\citep{angelopoulos2023gentle, angelopoulos2024conformal}. When the judge is uncertain, the framework first attempts to improve the evaluation through web retrieval; if the judge remains uncertain after retrieval, it abstains and flags the instance for human review~\footnote{We do not implement human review; the term indicates that these instances are excluded from automated evaluation.}.

\subsection{Problem Formulation}
\label{sec:formulation}

\paragraph{Task.}
Let $F\!:\!\mathcal{X}\!\to\!\mathcal{Y}$ be an examinee LLM that, given a question $\mathbf{x} \in \mathcal{X}$, generates a candidate answer $\hat{y} \in \mathcal{Y}$, and let $y^* \in \mathcal{Y}$ be the corresponding ground-truth answer. A judge LLM $\mathcal{J}\!:\!\mathcal{X}\!\times\!\mathcal{Y}\!\to\!\{0,1\}$ evaluates whether $\hat{y}$ correctly answers~$\mathbf{x}$, producing a binary verdict $v \in \{0,1\}$, where $v\!=\!1$ means the judge deems $\hat{y}$ correct.

\paragraph{Admission function.}
The ground-truth evaluation label $e^* \in \{0,1\}$ indicates whether $\hat{y}$ is truly correct. It can be obtained from human annotation, or by applying a task-specific relevance function $\mathcal{A}_{\mathrm{ref}}\!:\!\mathcal{Y}\!\times\!\mathcal{Y}\!\to\![0,1]$ with a threshold~$\lambda_A$:
\begin{equation}
\label{eq:gt_label}
  e^* = \ind\!\left[\mathcal{A}_{\mathrm{ref}}(\hat{y},\, y^*) \geq \lambda_A\right].
\end{equation}
A verdict~$v$ is admissible if it matches the ground truth:
\begin{equation}
\label{eq:admission}
  \mathcal{A}(v,\, e^*) = \ind\!\left[v = e^*\right].
\end{equation}

\paragraph{Selective evaluation.}
Rather than always outputting a verdict, the judge is allowed to abstain when it is uncertain. We define a failure indicator $W = \ind[v \neq e^*]$. Given an uncertainty score $U$ (defined in \S\ref{sec:uncertainty}), the judge outputs a verdict only when $U \leq t$ for some threshold~$t$. The false discovery rate (FDR) is:
\begin{equation}
\label{eq:fdr}
  R(t) = \mathbb{E}\!\left[W \mid U \leq t\right],
\end{equation}
i.e., the error rate among instances the judge chooses to evaluate.

\paragraph{Goal.}
For a user-specified risk level $\alpha \in (0,1)$ and significance level $\delta$ (e.g., $0.05$), we seek a threshold~$t$ such that:
\begin{equation}
\label{eq:goal}
  \Pr\!\left(R(t) \leq \alpha\right) \geq 1 - \delta,
\end{equation}
where the probability is over the randomness of the calibration data. With high confidence ($1\!-\!\delta$), the error rate among the judge's non-abstained outputs is at most~$\alpha$.

\paragraph{Calibration data.}
We hold out $N$ calibration samples $\mathcal{D}_{\mathrm{cal}} = \{(\mathbf{x}_i, \hat{y}_i, y^*_i)\}_{i=1}^{N}$ with known ground-truth labels $e_i^*$, drawn i.i.d.\ from the data-generating distribution~$\mathcal{D}$.

\subsection{Uncertainty Quantification (UQ)}
\label{sec:uncertainty}

Accurate uncertainty estimation is critical for selective evaluation: the better the uncertainty score separates correct from incorrect verdicts, the higher the coverage (fewer abstentions) at a given risk level. 

\subsection{Risk-Bounded Selective Evaluation}
\label{sec:single}

We now describe how to calibrate the uncertainty threshold to satisfy the guarantee in Eq.~\eqref{eq:goal}. This section presents the base case where the judge uses only its parametric knowledge (no retrieval); we extend to retrieval-augmented judging in \S\ref{sec:routing}.

\paragraph{Empirical failure analysis.}
For each calibration instance, we obtain the judge's verdict $v_i$ and uncertainty score $U_i$. Given a candidate threshold~$t$, we compute the number of instances the judge would evaluate (selection size):
\begin{equation}
\label{eq:mcal}
  \hat{m}_{\mathrm{cal}}(t) = \sum_{i=1}^{N} \ind\!\left\{U_i \leq t\right\},
\end{equation}
and the number of errors among them (error count):
\begin{equation}
\label{eq:wcal}
  \hat{w}_{\mathrm{cal}}(t) = \sum_{i=1}^{N} \ind\!\left\{U_i \leq t\right\}\!\cdot\!(1 - c_i),
\end{equation}
where $c_i = \mathcal{A}(v_i, e^*_i)$ is the correctness indicator. The empirical FDR is $\hat{r}_{\mathrm{cal}}(t) = \hat{w}_{\mathrm{cal}}(t) / \hat{m}_{\mathrm{cal}}(t)$.

\paragraph{Upper confidence bound (UCB).}
The empirical FDR is an unbiased point estimate of the true failure rate, but may underestimate it on any given calibration set due to sampling variability. To obtain a rigorous bound, we leverage the Clopper--Pearson exact method. Since the calibration instances are i.i.d.\ and the selection event $\{U_i \leq t\}$ is a deterministic function of $(\mathbf{x}_i, \hat{y}_i)$, the failure indicators of selected instances are i.i.d.\ Bernoulli with parameter $R(t)$. We can therefore apply the one-sided Clopper--Pearson bound:
\begin{equation}
\label{eq:ucb}
  \hat{R}^{\mathrm{upper}}_{1-\delta}(t) =
  \mathrm{BetaInv}\!\big(
  1 - \delta;\;
  \hat{w}_{\mathrm{cal}}(t) + 1,\;
  \hat{m}_{\mathrm{cal}}(t) - \hat{w}_{\mathrm{cal}}(t)
  \big),
\end{equation}
which satisfies $\Pr(R(t) \leq \hat{R}^{\mathrm{upper}}_{1-\delta}(t)) \geq 1 - \delta$ (Appendix~\ref{app:proof_ucb}).

\paragraph{Threshold selection.}
We select the largest threshold whose UCB stays below the risk level $\alpha$:
\begin{equation}
\label{eq:that_single}
  \hat{t} = \sup\big\{t : \hat{R}^{\mathrm{upper}}_{1-\delta}(t') \leq \alpha \;\text{for all}\; t' \leq t\big\}.
\end{equation}
This maximizes the number of instances the judge evaluates while targeting Eq.~\eqref{eq:goal}.

\subsection{Adaptive Retrieval via Two-Mode Routing}
\label{sec:routing}
Inspired by LLM-based agents that leverage external tools to overcome the base model limitations, we equip the judge with web search as a tool to bridge its knowledge gaps. When the judge is not confident based on its parametric knowledge alone, it retrieves relevant evidence from the web and re-evaluates with this augmented context. This recovers coverage on exactly the instances that the single-mode framework would abstain on, turning abstention into an opportunity for informed evaluation.

\paragraph{Two evaluation modes.}
For each instance $(\mathbf{x}_i, \hat{y}_i)$, we define:
\begin{itemize}
  \item \textbf{Mode~1} (direct): $\mathcal{J}$ evaluates using only its internal knowledge, producing verdict $v_1^{(i)}$ with uncertainty $U_1^{(i)}$.
  \item \textbf{Mode~2} (retrieval-augmented): The system queries a web search engine using the question $\mathbf{x}_i$, retrieving the top-$k$ results, each containing a title, text snippet, and source URL. These are concatenated into an evidence passage $\mathcal{E}_i$ appended to the judge's prompt. The judge $\mathcal{J}$ re-evaluates with this context, producing $v_2^{(i)}$ with uncertainty $U_2^{(i)}$.
\end{itemize}
We denote the correctness of each mode as $c_k^{(i)} = \mathcal{A}(v_k^{(i)}, e^*_i)$ for $k \in \{1, 2\}$.

\paragraph{Routing logic.}
At test time, given calibrated thresholds $(\hat{t}_1, \hat{t}_2)$:
\begin{equation}
\label{eq:routing}
  \text{output} =
  \begin{cases}
    v_1, & \text{if } U_1 \leq \hat{t}_1, \\
    v_2, & \text{if } U_1 > \hat{t}_1 \;\text{and}\; U_2 \leq \hat{t}_2, \\
    \textsc{abstain}, & \text{otherwise.}
  \end{cases}
\end{equation}
Retrieval is invoked only when Mode~1 is uncertain, so confident instances incur zero retrieval cost.

\paragraph{Joint threshold calibration.}
During calibration, we pre-compute both modes for all $N$ instances in $\mathcal{D}_{\mathrm{cal}}$. For a candidate pair $(t_1, t_2)$, the selection size counts instances accepted via either mode:
\begin{equation}
\label{eq:mcal_route}
  \hat{m}_{\mathrm{cal}}(t_1, t_2) = \sum_{i=1}^{N} \Big[\ind\!\left\{U_1^{(i)} \leq t_1\right\} + \ind\!\big\{U_1^{(i)} > t_1 \;\text{and}\; U_2^{(i)} \leq t_2\big\}\Big].
\end{equation}
The error count among selected instances is:
\begin{equation}
\label{eq:wcal_route}
  \hat{w}_{\mathrm{cal}}(t_1, t_2) = \sum_{i=1}^{N} \Big[\ind\!\{U_1^{(i)} \leq t_1\}(1 - c_1^{(i)}) + \ind\!\{U_1^{(i)} > t_1 \wedge U_2^{(i)} \leq t_2\}(1 - c_2^{(i)})\Big].
\end{equation}
The empirical FDR and UCB follow as before:
\begin{align}
  \hat{r}_{\mathrm{cal}}(t_1, t_2) &= \frac{\hat{w}_{\mathrm{cal}}(t_1, t_2)}{\hat{m}_{\mathrm{cal}}(t_1, t_2)}, \label{eq:fdr_route} \\[4pt]
  \hat{R}^{\mathrm{upper}}_{1-\delta}(t_1, t_2) &= \mathrm{BetaInv}\!\big(1 - \delta;\; \hat{w}_{\mathrm{cal}} + 1,\; \hat{m}_{\mathrm{cal}} - \hat{w}_{\mathrm{cal}}\big). \label{eq:ucb_route}
\end{align}
We select the threshold pair that maximizes coverage subject to the risk constraint:
\begin{equation}
\label{eq:that_route}
  (\hat{t}_1, \hat{t}_2) = \argmax_{t_1,\, t_2} \hat{m}_{\mathrm{cal}}(t_1, t_2) \quad \text{s.t.} \quad \hat{R}^{\mathrm{upper}}_{1-\delta}(t_1, t_2) \leq \alpha.
\end{equation}
In practice, this is solved by search over the sorted unique uncertainty values on the calibration set. For each candidate $t_1$, cumulative sums enable evaluating all $t_2$ candidates in $O(|\mathcal{T}_2|)$ time, yielding $O(|\mathcal{T}_1| \cdot |\mathcal{T}_2|)$ total complexity (Algorithm~\ref{alg:calibration}). Because this search evaluates many threshold pairs, we additionally report a conservative variant that corrects for it via a Bonferroni adjustment ($\delta' = \delta/(|\mathcal{T}_1|\,|\mathcal{T}_2|)$) (see Appendix~\ref{app:bonferroni}).

\subsection{Theoretical Guarantee}
\label{sec:guarantee}

The central theoretical question is whether the single-mode Clopper–Pearson guarantee (Eq.~\ref{eq:goal}) extends to the two-mode routing setting. It does, because the routing decision for each instance is entirely determined by its uncertainty scores and the fixed thresholds, introducing no additional randomness.

\begin{lemma}[Bernoulli structure under routing]
\label{lem:routing}
Let $(\mathbf{x}_i, \hat{y}_i, y^*_i)$ be i.i.d.\ samples from $\mathcal{D}$, and let $(t_1, t_2)$ be fixed thresholds. Under the routing policy (Eq.~\ref{eq:routing}), the failure indicators $\{W_i\}$ over the selected subset are i.i.d.\ $\mathrm{Bernoulli}(R(t_1, t_2))$.
\end{lemma}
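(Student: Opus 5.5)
The plan is to reduce the lemma to an elementary statement about i.i.d.\ sequences conditioned on a fixed measurable event. For each calibration instance, write $Z_i = (\mathbf{x}_i, \hat{y}_i, y^*_i)$ together with the judge outputs $(v_1^{(i)}, U_1^{(i)}, v_2^{(i)}, U_2^{(i)})$. We assume these outputs are produced by a fixed (possibly randomized, but with fresh independent randomness per instance) procedure applied to $Z_i$. This covers sampled verdicts, the web-evidence passage $\mathcal{E}_i$, and $e^*_i$ from Eq.~\eqref{eq:gt_label}. Under that assumption, the augmented tuples $\tilde Z_i$ are i.i.d. The first step is to make this explicit; it is the only modelling assumption the lemma needs.

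Next, with $(t_1,t_2)$ fixed, define the selection indicator
\[
S_i = \ind\{U_1^{(i)} \le t_1\} + \ind\{U_1^{(i)} > t_1,\, U_2^{(i)} \le t_2\},
\]
which takes values in $\{0,1\}$ because the two events are disjoint. Define the routed failure indicator
\[
W_i = \ind\{U_1^{(i)} \le t_1\}(1-c_1^{(i)}) + \ind\{U_1^{(i)} > t_1,\, U_2^{(i)} \le t_2\}(1-c_2^{(i)}).
\]
Both are fixed measurable functions $g(\tilde Z_i)$ and $h(\tilde Z_i)$. Hence the pairs $(S_i, W_i)$ are i.i.d. Set $R(t_1,t_2) = \mathbb{E}[W \mid S=1] = \Pr(W=1 \mid S=1)$, assuming $\Pr(S=1) > 0$. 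This matches the population version of Eqs.~\eqref{eq:mcal_route}--\eqref{eq:fdr_route}.

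The core step is to show that, conditional on the selected index set $I = \{i : S_i = 1\}$, the variables $\{W_i\}_{i \in I}$ are i.i.d.\ $\mathrm{Bernoulli}(R(t_1,t_2))$. Fix any set $I$ and any $w \in \{0,1\}^{I}$. By independence across $i$, $\Pr(W_i = w_i\ \forall i \in I,\ S_j = \ind[j \in I]\ \forall j)$ factorizes into $\prod_{i \in I}\Pr(W_i = w_i, S_i = 1)\prod_{j \notin I}\Pr(S_j = 0)$. Dividing by $\Pr(S_j = \ind[j \in I]\ \forall j)$ yields $\prod_{i \in I} \Pr(W_i = w_i \mid S_i = 1)$. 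Because the pairs are identically distributed, each factor equals $R^{w_i}(1-R)^{1-w_i}$. Consequently, given $\hat m_{\mathrm{cal}} = m$, the count $\hat w_{\mathrm{cal}}$ is $\mathrm{Binomial}(m, R(t_1,t_2))$. The Clopper--Pearson bound of Eq.~\eqref{eq:ucb_route} then holds conditionally on $m$ and hence unconditionally, exactly as in Appendix~\ref{app:proof_ucb}.

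I expect the main obstacle to be the subtlety of the conditioning rather than any calculation. The claim is false if $(t_1,t_2)$ is chosen from the same calibration data, as in Eq.~\eqref{eq:that_route}. The lemma must therefore be stated and used only for thresholds fixed in advance, and data-dependent selection must be handled separately, for instance via the Bonferroni correction of Appendix~\ref{app:bonferroni} or a monotone fixed-sequence argument. A second point is that the Mode~2 outputs need not be independent of the Mode~1 outputs within an instance. I will stress that no such independence is required: dependence within an instance is absorbed into the single i.i.d.\ tuple $\tilde Z_i$. Routing is a deterministic function of that tuple, so the two-mode case reduces verbatim to the single-mode argument.
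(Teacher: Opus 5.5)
Your proposal is correct and follows essentially the same route as the paper's proof in Appendix~\ref{app:proof_routing}. Both arguments note that the selection indicator and the routed failure indicator are fixed functions of each i.i.d.\ per-instance tuple, with the two routing events disjoint, so conditioning on selection gives i.i.d.\ $\mathrm{Bernoulli}(R(t_1,t_2))$ failures; your explicit factorization over the selected index set makes rigorous the paper's one-line claim that ``conditioning on selection preserves independence,'' and your allowance for per-instance randomness and your caveat about data-dependent thresholds (which the paper handles with the Bonferroni variant in Appendix~\ref{app:bonferroni}) are sound refinements, not a different argument.
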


\noindent The proof (Appendix~\ref{app:proof_routing}) shows that the selection indicator and the correctness of selected instances are both deterministic functions of $(\mathbf{x}_i, \hat{y}_i, y^*_i)$. Since the original data is i.i.d., conditioning on selection preserves independence and identical distribution. Combined with the Clopper--Pearson bound, this yields a guarantee for any fixed threshold pair:
\begin{equation}
\label{eq:guarantee_route}
  \boxed{\;\Pr\!\left(R(t_1, t_2) \leq \hat{R}^{\mathrm{upper}}_{1-\delta}(t_1, t_2)\right) \geq 1 - \delta.\;}
\end{equation}
In particular, any fixed pair whose bound satisfies $\hat{R}^{\mathrm{upper}}_{1-\delta}(t_1,t_2) \leq \alpha$ controls the error rate at $\alpha$ with confidence $1-\delta$. 

\begin{figure}[t]
\begin{minipage}[t]{0.52\textwidth}
\begin{algorithm}[H]
\small
\caption{Calibration (offline)}
\label{alg:calibration}
\begin{algorithmic}[1]
\Require Calibration set $\mathcal{D}_{\mathrm{cal}}$, judge $\mathcal{J}$
\Require Risk level $\alpha$, significance level $\delta$
\Ensure Calibrated thresholds $\hat{t}_1$, $\hat{t}_2$
\Statex
\For{each $(\mathbf{x}_i, \hat{y}_i, y_i^*) \in \mathcal{D}_{\mathrm{cal}}$}
    \State $v_1^{(i)}, U_1^{(i)} \gets \mathcal{J}(\mathbf{x}_i, \hat{y}_i)$
    \State $\mathcal{E}_i \gets \textsc{Retrieve}(\mathbf{x}_i)$
    \State $v_2^{(i)}, U_2^{(i)} \gets \mathcal{J}(\mathbf{x}_i, \hat{y}_i, \mathcal{E}_i)$
    \State $c_1^{(i)} \!\gets\! \mathcal{A}(v_1^{(i)}, e_i^*)$;\; $c_2^{(i)} \!\gets\! \mathcal{A}(v_2^{(i)}, e_i^*)$
\EndFor
\Statex
\State $\mathcal{T}_1 \gets$ sorted unique $\{U_1^{(i)}\}_{i=1}^{N}$
\State $\mathcal{T}_2 \gets$ sorted unique $\{U_2^{(i)}\}_{i=1}^{N}$
\State $\hat{t}_1, \hat{t}_2 \gets \textsc{null}$;\; $m^* \gets 0$
\For{$t_1$ in $\mathcal{T}_1$}
    \For{$t_2$ in $\mathcal{T}_2$}
        \State Compute $\hat{m}_{\mathrm{cal}},\, \hat{w}_{\mathrm{cal}}$
        \State $R^+ \!\gets\! \mathrm{BetaInv}(1\!-\!\delta;\, \hat{w}_{\mathrm{cal}}\!+\!1,\, \hat{m}_{\mathrm{cal}}\!-\!\hat{w}_{\mathrm{cal}})$
        \If{$R^+ \leq \alpha$ \textbf{and} $\hat{m}_{\mathrm{cal}} > m^*$}
            \State $\hat{t}_1 \!\gets\! t_1$; $\hat{t}_2 \!\gets\! t_2$; $m^* \!\gets\! \hat{m}_{\mathrm{cal}}$
        \EndIf
    \EndFor
\EndFor
\State \Return $\hat{t}_1$, $\hat{t}_2$
\end{algorithmic}
\end{algorithm}
\end{minipage}%
\hfill
\begin{minipage}[t]{0.45\textwidth}
\begin{algorithm}[H]
\small
\caption{Test-time evaluation (online)}
\label{alg:test}
\begin{algorithmic}[1]
\Require Instance $(\mathbf{x}, \hat{y})$, judge $\mathcal{J}$
\Require Thresholds $\hat{t}_1$, $\hat{t}_2$
\Ensure Verdict $v \in \{0, 1\}$ or \textsc{abstain}
\Statex
\State $v_1, U_1 \gets \mathcal{J}(\mathbf{x}, \hat{y})$
\If{$U_1 \leq \hat{t}_1$}
    \State \Return $v_1$ \Comment{Confident}
\EndIf
\Statex
\State $\mathcal{E} \gets \textsc{Retrieve}(\mathbf{x})$
\State $v_2, U_2 \gets \mathcal{J}(\mathbf{x}, \hat{y}, \mathcal{E})$
\If{$U_2 \leq \hat{t}_2$}
    \State \Return $v_2$ \Comment{Retrieval helped}
\EndIf
\Statex
\State \Return \textsc{abstain} \Comment{Human review}
\end{algorithmic}
\end{algorithm}
\vspace{0.5em}
\small
\textbf{Cost analysis.} Let $p$ be the fraction of instances accepted by Mode~1. Test-time retrieval is invoked for only $(1\!-\!p)$ of instances. At $\alpha\!=\!0.20$ with Qwen3-14B, $p\!\approx\!0.45$ on TriviaQA, so nearly half of evaluations incur zero retrieval cost. On harder benchmarks $p$ decreases, but the framework still avoids retrieval for the most confident instances.
\end{minipage}
\end{figure}

\section{Experiments}
\label{sec:experiments}

\paragraph{Datasets and models.}
We evaluate on four open-domain QA benchmarks: TriviaQA~\citep{joshi2017triviaqa}, Natural Questions (NQ)~\citep{kwiatkowskietal2019}, HotpotQA~\citep{yang2018hotpotqa}, and PopQA~\citep{mallen2023llm_memorization}. From each benchmark's standard held-out split, we sample 2{,}000 instances using seed~42.

We use two candidate models: Qwen3-8B~\citep{yang2025qwen3technicalreport} and LLaMA-3.1-70B~\citep{grattafiori2024llama3herdmodels}. For judges, we employ four models: Qwen3-4B, Qwen3-8B, and Qwen3-14B~\citep{yang2025qwen3technicalreport} and LLaMA-3.1-8B-Instruct~\citep{grattafiori2024llama3herdmodels}. Each judge outputs a binary verdict (\texttt{True}/\texttt{False}) and a brief explanation.

\paragraph{Admission functions.}
To evaluate whether the judge's verdict is correct, we first determine whether the candidate's answer is actually right by comparing it against the reference answer. This ground-truth label~$e^*$ is computed using three admission functions: (1)~Exact Match~(EM), (2)~Token~F1 (threshold~$\geq 0.5$), and (3)~LLM Evaluator~\citep{badshah2025ref, badshah-clev}, which uses Qwen2.5-7B-Instruct~\citep{qwen2025qwen25technicalreport} as a reference-based evaluator to judge semantic equivalence.

\paragraph{Uncertainty quantification.}
We use predictive entropy~(PE)~\citep{malinin2021uncertainty} as the uncertainty measure. PE is computed from the judge's token-level log-probabilities at the verdict position via a single forward pass~\citep{kadavath2022languagemodelsmostlyknow}. When the judge's hidden states are available, PE can be replaced by supervised scores without changing the guarantee. We report a probe and a LoRA+Prompt variant in Appendix~\ref{app:uqmenu}.

\paragraph{Retrieval.}
For Mode~2, we query a web search engine using the original question (i.e., the question provided to the candidate) and retrieve the top-$k$ results, with $k=3$ as the default setting. The retrieved results are concatenated into a single passage and appended to the judge prompt as additional context. We further study the effect of retrieval depth by varying $k \in \{3,6,9\}$ on one candidate--judge pair (LLaMA-3.1-70B evaluated by Qwen3-8B).

\paragraph{Calibration protocol.}
We run 100 random 50/50 calibration/test splits using the Clopper--Pearson exact binomial bound. We set $\delta\!=\!0.05$ and evaluate $\alpha \in \{0.05, 0.10, 0.15, 0.20, 0.25\}$. Appendix~\ref{app:calibsens} reports sensitivity to both $\delta$ and the calibration-set size.

\paragraph{Evaluation metrics.}
We report: (1)~FDR, the fraction of accepted instances with incorrect verdicts ($w_\text{test}/m_\text{test}$); (2)~Coverage, the fraction of instances accepted rather than abstained on ($m_\text{test}/N_\text{test}$); and (3)~Routing distribution which is the split across Mode~1, Mode~2, and abstention.

\section{Results}
\label{sec:results}

\begin{figure*}[t]
\centering
\includegraphics[width=\textwidth]{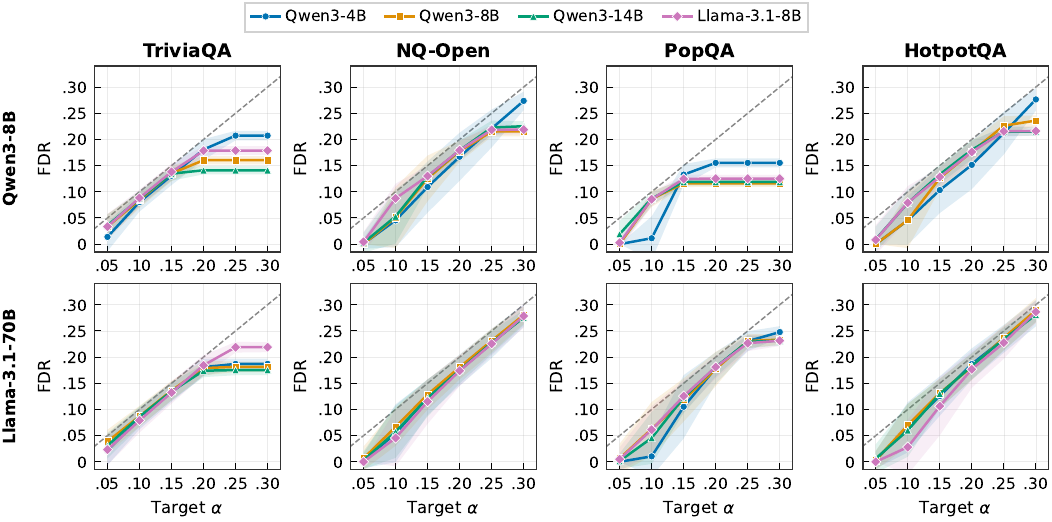}
\caption{Test-time FDR (mean$\pm$std over 100 splits) across all 32 configurations. Rows correspond to the two candidate models; columns to the four datasets. Each line represents a different judge (see legend). The dashed diagonal marks FDR~$= \alpha$.}
\label{fig:fdr_all}
\end{figure*}

We evaluate whether the framework delivers on its two central promises: (1)~the FDR guarantee holds empirically across judges, candidates, and datasets, and (2)~adaptive retrieval recovers coverage that would otherwise be lost to abstention. We use predictive entropy and LLM evaluator for the main results, and additionally analyze the sensitivity of our framework to the choice of admission function and retrieval depth (\S\ref{sec:sensitivity}).

\subsection{FDR control holds across all configurations}
\label{sec:fdr_results}

The most basic requirement of our framework is that the empirical FDR remains at or below the target~$\alpha$. Figure~\ref{fig:fdr_all} tests this across all candidate--judge--dataset configurations. In every case, the FDR stays at or below~$\alpha$, satisfying the finite-sample guarantee (Eq.~\ref{eq:guarantee_route}).

Three observations are noteworthy. First, all four judges provide valid FDR control regardless of their parameter count, showing that the guarantee does not require a strong judge. Second, FDR tracks close to~$\alpha$ with narrow confidence bands ($\pm$0.01--0.02), which indicates stable calibration across 100 random splits. Third, the gap between FDR and~$\alpha$ is smallest on TriviaQA and PopQA, which are relatively easy tasks where judges are most accurate. This demonstrates that the framework is tightest when the judge has sufficient knowledge to evaluate.

\subsection{Coverage scales with risk tolerance and judge capability}
\label{sec:coverage_results}

Table~\ref{tab:coverage} reports coverage across five risk levels. Coverage scales smoothly with~$\alpha$ and is governed by two factors: dataset difficulty and judge capability. On easy benchmarks (TriviaQA, PopQA), even the smallest judge reaches near-full coverage by $\alpha\!=\!0.25$, while on knowledge-intensive tasks (NQ-Open, HotpotQA) only the larger judges (Qwen-14B, Llama-8B) surpass 80\% at $\alpha\!=\!0.20$ for the Qwen-8B candidate --- Qwen-4B remains below 15\% on these same benchmarks. Evaluating the stronger Llama-70B candidate is consistently harder: coverage drops across all judges compared to Qwen-8B, with the best judge reaching only 53\% on HotpotQA at $\alpha\!=\!0.20$. This gap reflects both the answer diversity of the 70B model and the increased knowledge demands it places on the judge.

\begin{table*}[h]
\centering
\caption{Coverage of the joint two-mode framework (two-threshold routing with retrieval) at $\alpha \in \{.05, .10, .15, .20, .25\}$ (mean over 100 splits) across all datasets.}
\label{tab:coverage}
\resizebox{\textwidth}{!}{%
\begin{tabular}{ll|ccccc|ccccc|ccccc|ccccc}
\toprule
 & & \multicolumn{5}{c|}{TriviaQA} & \multicolumn{5}{c|}{NQ-Open} & \multicolumn{5}{c|}{PopQA} & \multicolumn{5}{c}{HotpotQA} \\
Cand. & Judge & .05 & .10 & .15 & .20 & .25 & .05 & .10 & .15 & .20 & .25 & .05 & .10 & .15 & .20 & .25 & .05 & .10 & .15 & .20 & .25 \\
\midrule
\multirow{4}{*}{\rotatebox{90}{\scriptsize Qwen-8B}}
 & Qwen-4B   & .03 & .39 & .66 & .91 & 1.0 & .00 & .03 & .08 & .14 & .28 & .00 & .01 & .91 & 1.0 & 1.0 & .00 & .04 & .09 & .14 & .30 \\
 & Qwen-8B   & .12 & .63 & .91 & 1.0 & 1.0 & .00 & .02 & .24 & .82 & 1.0 & .00 & .87 & 1.0 & 1.0 & 1.0 & .00 & .04 & .49 & .74 & .96 \\
 & Qwen-14B  & .21 & .78 & .98 & 1.0 & 1.0 & .00 & .05 & .56 & .81 & .99 & .19 & .89 & 1.0 & 1.0 & 1.0 & .02 & .26 & .64 & .86 & 1.0 \\
 & Llama-8B  & .13 & .61 & .86 & 1.0 & 1.0 & .00 & .20 & .61 & .83 & 1.0 & .02 & .88 & 1.0 & 1.0 & 1.0 & .01 & .30 & .65 & .85 & 1.0 \\
\midrule
\multirow{4}{*}{\rotatebox{90}{\scriptsize Llama-70B}}
 & Qwen-4B   & .05 & .37 & .80 & .98 & 1.0 & .00 & .05 & .18 & .32 & .48 & .00 & .00 & .11 & .35 & .92 & .00 & .09 & .20 & .34 & .55 \\
 & Qwen-8B   & .17 & .44 & .83 & .99 & 1.0 & .01 & .10 & .25 & .47 & .73 & .00 & .06 & .11 & .48 & .98 & .00 & .11 & .31 & .47 & .65 \\
 & Qwen-14B  & .12 & .46 & .85 & 1.0 & 1.0 & .00 & .05 & .23 & .51 & .75 & .00 & .04 & .31 & .77 & .99 & .00 & .07 & .31 & .53 & .76 \\
 & Llama-8B  & .04 & .32 & .67 & .86 & 1.0 & .00 & .02 & .13 & .32 & .59 & .00 & .07 & .23 & .77 & .98 & .00 & .02 & .11 & .27 & .51 \\
\bottomrule
\end{tabular}%
}
\end{table*}

\subsection{Adaptive retrieval recovers substantial coverage}
\label{sec:routing_results}
Figure~\ref{fig:routing} breaks down accepted instances by routing mode. It is depicted that retrieval is the dominant acceptance mechanism. Across various configurations, Mode~2 accounts for the majority of non-abstained verdicts. This does not necessarily mean the judge lacks the knowledge to evaluate correctly, but rather that its uncertainty signal in Mode~1 is not confident enough to satisfy the calibrated threshold. Crucially, these are instances where a single-mode framework would abstain entirely. By grounding the judge in retrieved evidence, Mode~2 reduces uncertainty below the second threshold and converts these abstentions into accepted verdicts. This trade-off also depends on the underlying judge model. Models with better-calibrated uncertainty estimates allow a larger fraction of instances to be resolved by Mode~1 directly. For instance, at $\alpha\!=\!0.20$ on TriviaQA, Qwen3-14B resolves 45\% of evaluations parametrically versus 9\% for Qwen3-4B. On harder tasks like HotpotQA and NQ-Open, all judges rely predominantly on retrieved evidence. 

Although retrieval substantially expands coverage, it does not always improve the judge's confidence. For 14--29\% of instances, retrieval increases rather than decreases uncertainty (Appendix~\ref{app:deltau}). The second calibrated threshold identifies these cases and routes them to abstention. Consequently, only retrieval-based verdicts that satisfy the desired risk level are accepted.

\begin{figure*}[htbp]
\centering
\includegraphics[width=\textwidth]{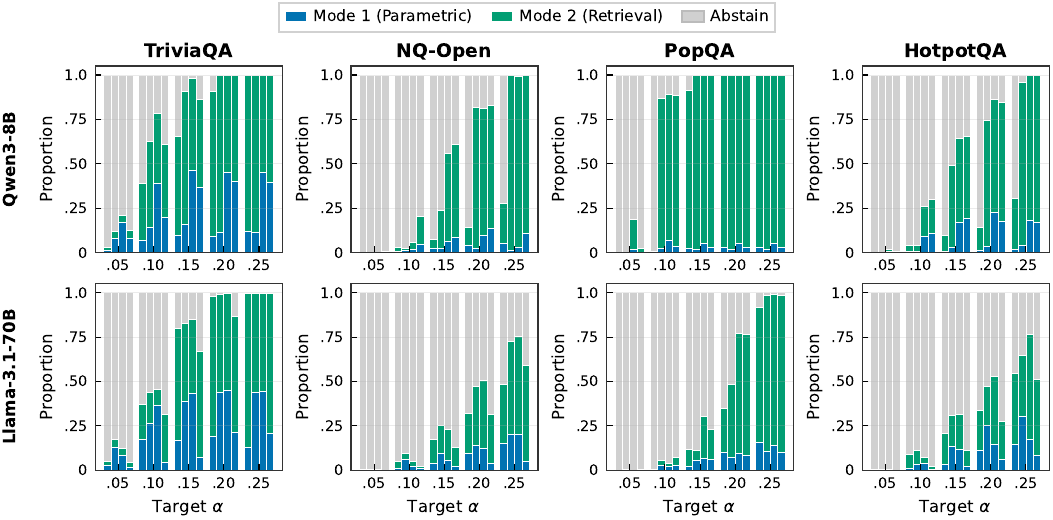}
\caption{Routing distribution across configurations. Within each group of four bars, judges are ordered left to right: Qwen3-4B, Qwen3-8B, Qwen3-14B, Llama-3.1-8B.}
\label{fig:routing}
\end{figure*}

\subsection{Single-mode vs.\ joint two-mode routing}
\label{sec:mode1_vs_joint}

Table~\ref{tab:mode1_vs_joint} isolates the contribution of adaptive retrieval by comparing Mode~1 Only against the full joint framework. On TriviaQA, Mode~1 already achieves high coverage for strong judges (87\% for Qwen3-14B), so retrieval provides a modest but consistent gain. The effect is more pronounced: on NQ-Open, coverage for Qwen3-8B increases from 7\% to 82\%; on HotpotQA, coverage for Llama-8B increases from 40\% to 85\%. Similarly, the weakest judge (Qwen3-4B) benefits substantially on PopQA, with coverage increasing from 4\% to 100\%.

\begin{table*}[!ht]
\centering
\caption{Coverage at $\alpha\!=\!0.20$: Mode~1 Only vs.\ Joint (two-threshold routing with retrieval).}
\label{tab:mode1_vs_joint}
\resizebox{\textwidth}{!}{%
\footnotesize
\begin{tabular}{ll|cc|cc|cc|cc}
\toprule
 & & \multicolumn{2}{c|}{TriviaQA} & \multicolumn{2}{c|}{NQ-Open} & \multicolumn{2}{c|}{PopQA} & \multicolumn{2}{c}{HotpotQA} \\
Cand. & Judge & M1 Only & Joint & M1 Only & Joint & M1 Only & Joint & M1 Only & Joint \\
\midrule
\multirow{4}{*}{\rotatebox{90}{\scriptsize Qwen-8B}}
 & Qwen-4B   & .47 & .91 & .01 & .14 & .04 & 1.0 & .00 & .14 \\
 & Qwen-8B   & .59 & 1.0 & .07 & .82 & .03 & 1.0 & .05 & .74 \\
 & Qwen-14B  & .87 & 1.0 & .14 & .81 & .15 & 1.0 & .35 & .86 \\
 & Llama-8B  & .77 & 1.0 & .22 & .83 & .21 & 1.0 & .40 & .85 \\
\midrule
\multirow{4}{*}{\rotatebox{90}{\scriptsize Llama-70B}}
 & Qwen-4B   & .67 & .98 & .07 & .32 & .05 & .35 & .09 & .34 \\
 & Qwen-8B   & .82 & .99 & .22 & .47 & .08 & .48 & .27 & .47 \\
 & Qwen-14B  & .84 & 1.0 & .12 & .51 & .15 & .77 & .24 & .53 \\
 & Llama-8B  & .48 & .86 & .02 & .32 & .12 & .77 & .02 & .27 \\
\bottomrule
\end{tabular}%
}
\end{table*}
\subsection{Robustness to retrieval non-stationarity}
\label{sec:drift}
Web search results change over time, which \S\ref{sec:guarantee} identified as the main practical threat to the determinism the guarantee assumes for Mode~2. We measure the effect directly on the main configuration (Qwen3-8B candidate, Qwen3-8B judge, LLM evaluator, $k\!=\!3$). We re-query the same 2{,}000 questions on TriviaQA and HotpotQA roughly three months after the original snapshot and recompute the Mode~2 verdicts on the fresh evidence. The retrieved pages turn over substantially. The mean per-item Jaccard overlap between the original ($\tau_0$) and re-queried ($\tau_1$) ranked top-$3$ URL lists is $0.30$ for TriviaQA and $0.24$ for HotpotQA. When ignoring the retrieval rank and considering only URL membership, the corresponding position-blind top-$3$ overlaps are $0.47$ and $0.40$, respectively.

To evaluate whether the calibrated thresholds remain valid after retrieval drift, we calibrate $(\hat{t}_1, \hat{t}_2)$ on the original data and apply them to test instances evaluated with the updated Mode~2 verdicts (while keeping Mode~1 unchanged). We repeat this evaluation over 100 random 50/50 splits. Table~\ref{tab:drift} summarizes the results. Across all settings, the empirical FDR remains below the target level~$\alpha$. Similarly, coverage remains largely stable after redeployment. This indicates that drift changes which pages are returned more than the overall quality of the evidence. Therefore, the calibrated thresholds remain effective after redeployment.

\begin{table*}[!ht]
\centering
\footnotesize
\caption{FDR and coverage under retrieval drift from the original
($\tau_0$) to the re-queried snapshot ($\tau_1$, $\sim$3 months later).
$\Delta=\tau_1-\tau_0$.}
\label{tab:drift}
\begin{tabular*}{\textwidth}{@{\extracolsep{\fill}}c|ccc|ccc|ccc|ccc@{}}
\toprule
 & \multicolumn{6}{c|}{TriviaQA} & \multicolumn{6}{c}{HotpotQA} \\
 & \multicolumn{3}{c|}{FDR} & \multicolumn{3}{c|}{Coverage} & \multicolumn{3}{c|}{FDR} & \multicolumn{3}{c}{Coverage} \\
$\alpha$ & $\tau_0$ & $\tau_1$ & $\Delta$ & $\tau_0$ & $\tau_1$ & $\Delta$ & $\tau_0$ & $\tau_1$ & $\Delta$ & $\tau_0$ & $\tau_1$ & $\Delta$ \\
\midrule
.05 & .03 & .04 & .00 & .12 & .17 & $+$.05 & .00 & .00 & .00 & .00 & .00 & .00 \\
.10 & .08 & .09 & .00 & .63 & .68 & $+$.06 & .05 & .03 & $-$.01 & .04 & .02 & $-$.02 \\
.15 & .14 & .13 & .00 & .91 & .94 & $+$.04 & .13 & .13 & .00 & .49 & .49 & .00 \\
.20 & .16 & .15 & $-$.01 & 1.0 & 1.0 & .00 & .18 & .18 & .00 & .74 & .72 & $-$.02 \\
.25 & .16 & .15 & $-$.01 & 1.0 & 1.0 & .00 & .23 & .23 & .00 & .96 & .96 & .00 \\
\bottomrule
\end{tabular*}
\end{table*}

\section{Conclusion}
We introduced a risk-controlled selective evaluation framework for LLM-as-a-judge with finite-sample guarantees on the reliability of accepted verdicts. The framework uses a calibrated two-mode policy that routes uncertain instances to a second evaluation mode and achieves higher coverage than single-mode abstention. More importantly, the Clopper--Pearson FDR guarantee remains valid under the joint two-threshold policy without requiring additional distributional assumptions. Across all tested configurations, the observed FDR remains at or below the specified target. Future work will consider graded and subjective evaluation settings, as well as black-box judges without access to logits.

\section*{Acknowledgments}
We acknowledge the support of the Natural Sciences and Engineering Research Council of Canada (NSERC), the Canada Foundation for Innovation (CFI), and Research Nova Scotia. Advanced computing resources are provided by ACENET, the regional partner in Atlantic Canada, and the Digital Research Alliance of Canada.

\bibliography{colm2026_conference}
\bibliographystyle{colm2026_conference}

\appendix

\appendix

\section{Proofs}
\label{app:proofs}

\subsection{Clopper--Pearson UCB Validity}
\label{app:proof_ucb}

We show that $\hat{R}^{\mathrm{upper}}_{1-\delta}(t)$ in Eq.~\eqref{eq:ucb} satisfies $\Pr(R(t) \leq \hat{R}^{\mathrm{upper}}_{1-\delta}(t)) \geq 1 - \delta$.

\begin{proof}
Under the Bernoulli structure, the observed failure count among $\hat{m}_{\mathrm{cal}}(t)$ selected instances follows $\hat{w}_{\mathrm{cal}}(t) \sim \mathrm{Binomial}(\hat{m}_{\mathrm{cal}}(t),\, R(t))$. Define the CDF of the empirical failure rate:
\begin{equation}
  D(r \mid R(t)) = \Pr\!\left(\hat{R}(t) \leq r \mid R(t)\right),
\end{equation}
where $\hat{R}(t) = \hat{w}_{\mathrm{cal}}(t) / \hat{m}_{\mathrm{cal}}(t)$. By definition:
\begin{equation}
  \hat{R}^{\mathrm{upper}}_{1-\delta}(t) = \sup\!\left\{R \in [0,1] : D(\hat{r}_{\mathrm{cal}}(t) \mid R) \geq \delta\right\}.
\end{equation}
Since $D(r \mid R)$ is monotonically decreasing in $R$ (higher true rate makes low empirical rates less likely), $R(t) > \hat{R}^{\mathrm{upper}}_{1-\delta}(t)$ implies $D(\hat{r}_{\mathrm{cal}}(t) \mid R(t)) < \delta$. Therefore:
\begin{equation}
  \Pr\!\left(R(t) > \hat{R}^{\mathrm{upper}}_{1-\delta}(t)\right)
  \leq \Pr\!\left(D(\hat{r}_{\mathrm{cal}}(t) \mid R(t)) < \delta\right)
  \leq \delta,
\end{equation}
where the last step uses the super-uniformity property: $\Pr(D(\hat{r}_{\mathrm{cal}}(t) \mid R(t)) < \delta) \leq \delta$ for discrete distributions. Hence $\Pr(R(t) \leq \hat{R}^{\mathrm{upper}}_{1-\delta}(t)) \geq 1 - \delta$.
\end{proof}

\subsection{Bernoulli Structure under Two-Mode Routing}
\label{app:proof_routing}

\begin{proof}[Proof of Lemma~\ref{lem:routing}]
For fixed thresholds $(t_1, t_2)$, we define the selection indicator:
\begin{equation}
  I_i := \ind\!\left\{U_1^{(i)} \!\leq\! t_1\right\} + \ind\!\left\{U_1^{(i)} \!>\! t_1 \;\text{and}\; U_2^{(i)} \!\leq\! t_2\right\}.
\end{equation}
Both $U_1^{(i)} = U(\mathcal{J}(\mathbf{x}_i, \hat{y}_i))$ and $U_2^{(i)} = U(\mathcal{J}(\mathbf{x}_i, \hat{y}_i, \mathcal{E}_i))$ are deterministic functions of $(\mathbf{x}_i, \hat{y}_i)$: the judge's computation and the retrieval process $\mathcal{E}_i$ are both determined by the input. Hence $I_i$ is a deterministic function of $(\mathbf{x}_i, \hat{y}_i)$.

The correctness of the selected verdict is:
\begin{align}
  c_i &= \ind\!\left\{U_1^{(i)} \!\leq\! t_1\right\} c_1^{(i)} \nonumber \\
  &\;+ \ind\!\left\{U_1^{(i)} \!>\! t_1 \,\wedge\, U_2^{(i)} \!\leq\! t_2\right\} c_2^{(i)},
\end{align}
which is a deterministic function of $(\mathbf{x}_i, \hat{y}_i, y^*_i)$, since the routing decision depends on $(\mathbf{x}_i, \hat{y}_i)$ and correctness additionally depends on $y^*_i$.

Since $(\mathbf{x}_i, \hat{y}_i, y^*_i)$ are i.i.d.\ from $\mathcal{D}$:
\begin{enumerate}
  \item \textbf{Independence}: The selection events $\{I_i = 1\}$ depend on disjoint instances, so they are mutually independent. Conditioning on selection preserves independence across instances.
  \item \textbf{Identical distribution}: Each selected instance satisfies $(\mathbf{x}_i, \hat{y}_i, y^*_i) \sim \mathcal{D}_{t_1,t_2}$, where $\mathcal{D}_{t_1,t_2} := \mathcal{D} \mid I = 1$ is the conditional distribution.
\end{enumerate}
Note that $I_i \in \{0,1\}$ because the two indicator terms are mutually exclusive: the second requires $U_1^{(i)} > t_1$, which precludes the first.

The failure indicators $W_i = 1 - c_i$ over the selected subset are therefore i.i.d.\ $\mathrm{Bernoulli}(R(t_1, t_2))$, where $R(t_1, t_2) = \mathbb{E}[W \mid I = 1]$.
\end{proof}

\paragraph{Scope and limitations.}
The guarantee is tied to the retrieval snapshot used during calibration. Because retrieval relies on a non-stationary web index, a query may return different evidence at deployment than it did at calibration. Such retrieval drift affects the guarantee only if it changes the conditional error rate of Mode~2 verdicts. As long as this error rate remains stable, the calibrated thresholds continue to transfer. We evaluate this scenario in Section~\ref{sec:drift} by re-querying the web several months after calibration and find that FDR control is preserved. If the quality of the retrieved evidence changes substantially, recalibration restores the guarantee. Likewise, any change to the task, candidate model, or judge model violates the exchangeability assumption and requires recalibration.

\section{Experimental Details}
\label{app:expdetails}

\subsection{Notation}
\label{app:notation}

Table~\ref{tab:notation_full} provides a complete reference of all symbols.

\begin{table}[!ht]
\small
\centering
\caption{Notation reference.}
\label{tab:notation_full}
\begin{tabular}{@{}p{1.7cm}p{5.2cm}@{}}
\toprule
\textbf{Symbol} & \textbf{Description} \\
\midrule
$\mathbf{x},\; \hat{y},\; y^*$ & Question, candidate answer, ground truth \\
$F,\; \mathcal{J}$ & Examinee LLM, judge LLM \\
$v \in \{0,1\}$ & Evaluation verdict \\
$e^* \in \{0,1\}$ & Ground-truth evaluation label \\
$\mathcal{A}(v, e^*)$ & Admission function: $\ind[v\!=\!e^*]$ \\
$\mathcal{A}_{\mathrm{ref}},\; \lambda_A$ & Relevance function and threshold \\
$W$ & Failure indicator: $\ind[v \neq e^*]$ \\
$U,\; U_1,\; U_2$ & Uncertainty (general / Mode 1 / Mode 2) \\
$v_1,\; v_2$ & Verdict from Mode 1 / Mode 2 \\
$c_1,\; c_2$ & Correctness of Mode 1 / Mode 2 \\
$\mathcal{E}$ & Retrieved external evidence \\
$\mathcal{D}_{\mathrm{cal}}$ & Calibration set ($N$ instances) \\
$\hat{m}_{\mathrm{cal}}$ & Selection size on calibration set \\
$\hat{w}_{\mathrm{cal}}$ & Error count on calibration set \\
$\hat{r}_{\mathrm{cal}}$ & Empirical FDR \\
$R(t)$ & True FDR \\
$\hat{R}^{\mathrm{upper}}_{1-\delta}$ & Clopper--Pearson UCB \\
$\hat{t}_1,\; \hat{t}_2$ & Calibrated thresholds (Mode 1, 2) \\
$\alpha$ & User-specified risk level \\
$\delta$ & Significance level (default $0.05$) \\
$K$ & Sampling size for black-box UQ \\
$\mathrm{Coverage}(t)$ & Coverage: $|\{i : U_i \leq t\}| / N_{\mathrm{test}}$ \\
\bottomrule
\end{tabular}
\end{table}

\subsection{Implementation}
\label{app:implementation}

\paragraph{Candidate generation.}
We generate candidate answers using greedy decoding for both Qwen3-8B and LLaMA-3.1-70B. For Qwen3-8B, thinking mode is disabled (\texttt{enable\_thinking=False}) to prevent the \texttt{<think>} block from consuming the generation budget.

\paragraph{Judge inference.}
For predictive entropy (PE), we perform a single forward pass at the verdict token position and extract the log-probabilities for the \texttt{True} and \texttt{False} tokens. PE is computed as binary entropy: $-p_{\text{true}} \log p_{\text{true}} - (1 - p_{\text{true}}) \log (1 - p_{\text{true}})$. This requires no additional inference beyond the greedy decoding pass used to produce the verdict.

\paragraph{Retrieval.}
We query the Serper API with the original question and retrieve the top-$k$ results. Each result contains a title, text snippet, and source URL. Results are concatenated into a single evidence passage appended to the Mode~2 judge prompt.

\paragraph{Calibration.}
We run 100 random 50/50 calibration/test splits. For each split, we perform a grid search over all unique uncertainty values on the calibration set to find the threshold pair $(\hat{t}_1, \hat{t}_2)$ that maximizes correctly accepted instances subject to the Clopper--Pearson UCB constraint $\hat{R}^{\mathrm{upper}}_{1-\delta}(t_1, t_2) \leq \alpha$. The grid search uses cumulative sums for $O(|\mathcal{T}_1| \cdot |\mathcal{T}_2|)$ complexity.

\subsection{Prompt Templates}
\label{app:prompts}

\subsubsection{Judge Prompts}
\paragraph{Mode~1 (parametric evaluation).}
The judge receives a system prompt and a user message containing the question and candidate answer. The prompt includes a one-shot example to establish the output format.

\begin{tcolorbox}[colback=gray!5, colframe=gray!60, title={\small Mode 1: Judge System Prompt}, fonttitle=\bfseries\small, fontupper=\small]
You are a helpful assistant acting as an impartial judge. You will be given a Question and a Candidate Answer. Your task is to judge whether the Candidate Answer is correct using your own knowledge. If the Candidate Answer is correct, choose `True', otherwise, choose `False'. Provide a brief explanation for your decision.

\medskip
Provide your response in the following format:\\
\texttt{Decision: [True/False]}\\
\texttt{Explanation: [Your brief explanation]}

\medskip
\textit{Example:}\\
Question: What is the capital of France?\\
Candidate Answer: Paris is the capital of France.\\
Decision: True\\
Explanation: Paris is indeed the capital of France.
\end{tcolorbox}

\begin{tcolorbox}[colback=blue!3, colframe=blue!40, title={\small Mode 1: User Message}, fonttitle=\bfseries\small, fontupper=\small]
Question: \texttt{\{question\}}

Candidate Answer: \texttt{\{candidate\_answer\}}
\end{tcolorbox}

\paragraph{Mode~2 (retrieval-augmented evaluation).}
The judge additionally receives retrieved web search results. Each result is formatted as: \texttt{[i] Title / Snippet / Source: URL}.

\begin{tcolorbox}[colback=gray!5, colframe=gray!60, title={\small Mode 2: Judge System Prompt}, fonttitle=\bfseries\small, fontupper=\small]
You are a helpful assistant acting as an impartial judge. You will be given a Question, a Candidate Answer, and Web Search Results. Your task is to judge whether the Candidate Answer is correct using the provided search results. If the search results support the Candidate Answer, choose `True'. If the search results contradict it, choose `False'. If the search results are inconclusive, use your best judgment. Provide a brief explanation for your decision.

\medskip
Provide your response in the following format:\\
\texttt{Decision: [True/False]}\\
\texttt{Explanation: [Your brief explanation]}
\end{tcolorbox}

\begin{tcolorbox}[colback=blue!3, colframe=blue!40, title={\small Mode 2: User Message}, fonttitle=\bfseries\small, fontupper=\small]
Question: \texttt{\{question\}}

Candidate Answer: \texttt{\{candidate\_answer\}}

Web Search Results: \texttt{\{evidence\}}
\end{tcolorbox}

\subsection{LLM Evaluator Prompt}

The LLM evaluator (Qwen2.5-7B-Instruct) determines whether the candidate answer is semantically equivalent to any reference answer~\citep{badshah2025ref}, serving as the admission function for computing the ground-truth label~$e^*$.

\begin{tcolorbox}[colback=green!3, colframe=green!50!black, title={\small LLM Evaluator (Admission Function)}, fonttitle=\bfseries\small, fontupper=\small]
You are a helpful assistant acting as an impartial evaluator. Given a question, a candidate answer, and reference answers, determine if the candidate answer is correct.

\medskip
The candidate answer is correct if it conveys the same meaning as ANY of the reference answers, even if the candidate answer is longer, contains additional explanation, uses synonyms, abbreviations, or alternate names. Focus on whether the final answer or conclusion matches a reference, not on surface form.

\medskip
Question: \texttt{\{question\}}

Candidate Answer: \texttt{\{candidate\_answer\}}

Reference Answers: \texttt{\{references\}}

\medskip
Is the candidate answer correct? Respond with ONLY ``True'' or ``False''.
\end{tcolorbox}

\section{Additional Results}
\label{app:additional_results}
This appendix provides additional results that complement the main paper, including raw judge accuracy, an analysis of the joint framework across different risk levels, and per-dataset coverage and FDR curves.

\subsection{Judge Verdict Accuracy}
\label{app:accuracy}
Table~\ref{tab:verdict_accuracy} reports the raw verdict accuracy of each judge in Mode~1 (parametric) and Mode~2 (retrieval-augmented) before any calibration or selective prediction. Retrieval consistently improves accuracy across all configurations, with the largest gains on PopQA (up to 44 percentage points), showing that retrieval provides useful evidence.

\begin{table*}[!ht]
\centering
\small
\caption{Judge verdict accuracy (\%) before calibration. Mode~1 uses parametric knowledge only whereas Mode~2 uses retrieval-augmented evaluation ($k\!=\!3$). LLM evaluator as admission function.}
\label{tab:verdict_accuracy}
\setlength{\tabcolsep}{11.5pt}
\begin{tabular}{ll|cc|cc|cc|cc}
\toprule
 & & \multicolumn{2}{c|}{TriviaQA} & \multicolumn{2}{c|}{NQ-Open} & \multicolumn{2}{c|}{PopQA} & \multicolumn{2}{c}{HotpotQA} \\
Cand. & Judge & M1 & M2 & M1 & M2 & M1 & M2 & M1 & M2 \\
\midrule
\multirow{4}{*}{\rotatebox{90}{\scriptsize Qwen-8B}}
 & Qwen-4B  & 68 & 79 & 52 & 71 & 47 & 84 & 54 & 69 \\
 & Qwen-8B  & 71 & 84 & 54 & 79 & 44 & 88 & 58 & 76 \\
 & Qwen-14B & 78 & 86 & 61 & 77 & 65 & 88 & 67 & 78 \\
 & Llama-8B & 77 & 82 & 63 & 78 & 68 & 88 & 68 & 78 \\
\midrule
\multirow{4}{*}{\rotatebox{90}{\scriptsize Llama-70B}}
 & Qwen-4B  & 75 & 81 & 62 & 69 & 60 & 75 & 63 & 68 \\
 & Qwen-8B  & 78 & 81 & 63 & 71 & 61 & 77 & 65 & 69 \\
 & Qwen-14B & 78 & 82 & 64 & 72 & 66 & 77 & 67 & 72 \\
 & Llama-8B & 72 & 78 & 63 & 70 & 64 & 77 & 64 & 68 \\
\bottomrule
\end{tabular}
\end{table*}

\subsection{FDR and Coverage Across Risk Levels}
\label{app:full_results}
Table~\ref{tab:full_results} reports FDR and coverage for the joint two-mode framework (two-threshold routing with retrieval) at all evaluated $\alpha$ levels for the Qwen-8B candidate across all four datasets and all four judges (LLM evaluator, PE, $k\!=\!3$).

\begin{table*}[h]
\centering
\small
\caption{Full results for the joint two-mode framework at all $\alpha$ levels across all four datasets. Candidate: Qwen3-8B, LLM evaluator, PE, $k\!=\!3$.}
\label{tab:full_results}
\resizebox{\textwidth}{!}{%
\begin{tabular}{ll|cccccccccccc}
\toprule
 & & \multicolumn{12}{c}{$\alpha$} \\
Judge & Metric & .05 & .10 & .15 & .20 & .25 & .30 & .35 & .40 & .45 & .50 & .55 & .60 \\
\midrule
\multicolumn{14}{c}{\textit{TriviaQA}} \\
\midrule
\multirow{2}{*}{Qwen-4B}
 & FDR & .03 & .08 & .13 & .18 & .21 & .21 & .21 & .21 & .21 & .21 & .21 & .21 \\
 & Cov. & .03 & .39 & .66 & .91 & 1.0 & 1.0 & 1.0 & 1.0 & 1.0 & 1.0 & 1.0 & 1.0 \\
\multirow{2}{*}{Qwen-8B}
 & FDR & .03 & .08 & .14 & .16 & .16 & .16 & .16 & .16 & .16 & .16 & .16 & .16 \\
 & Cov. & .12 & .63 & .91 & 1.0 & 1.0 & 1.0 & 1.0 & 1.0 & 1.0 & 1.0 & 1.0 & 1.0 \\
\multirow{2}{*}{Qwen-14B}
 & FDR & .04 & .09 & .13 & .14 & .14 & .14 & .14 & .14 & .14 & .14 & .14 & .14 \\
 & Cov. & .21 & .78 & .98 & 1.0 & 1.0 & 1.0 & 1.0 & 1.0 & 1.0 & 1.0 & 1.0 & 1.0 \\
\multirow{2}{*}{Llama-8B}
 & FDR & .03 & .09 & .14 & .18 & .18 & .18 & .18 & .18 & .18 & .18 & .18 & .18 \\
 & Cov. & .13 & .61 & .86 & 1.0 & 1.0 & 1.0 & 1.0 & 1.0 & 1.0 & 1.0 & 1.0 & 1.0 \\
\midrule
\multicolumn{14}{c}{\textit{NQ-Open}} \\
\midrule
\multirow{2}{*}{Qwen-4B}
 & FDR & .00 & .05 & .11 & .17 & .22 & .27 & .29 & .29 & .29 & .29 & .29 & .29 \\
 & Cov. & .00 & .03 & .08 & .14 & .28 & .93 & 1.0 & 1.0 & 1.0 & 1.0 & 1.0 & 1.0 \\
\multirow{2}{*}{Qwen-8B}
 & FDR & .00 & .05 & .13 & .18 & .22 & .22 & .22 & .22 & .22 & .22 & .22 & .22 \\
 & Cov. & .00 & .02 & .24 & .82 & 1.0 & 1.0 & 1.0 & 1.0 & 1.0 & 1.0 & 1.0 & 1.0 \\
\multirow{2}{*}{Qwen-14B}
 & FDR & .00 & .05 & .13 & .18 & .22 & .23 & .23 & .23 & .23 & .23 & .23 & .23 \\
 & Cov. & .00 & .05 & .56 & .81 & .99 & 1.0 & 1.0 & 1.0 & 1.0 & 1.0 & 1.0 & 1.0 \\
\multirow{2}{*}{Llama-8B}
 & FDR & .00 & .09 & .13 & .18 & .22 & .22 & .22 & .22 & .22 & .22 & .22 & .22 \\
 & Cov. & .00 & .20 & .61 & .83 & 1.0 & 1.0 & 1.0 & 1.0 & 1.0 & 1.0 & 1.0 & 1.0 \\
\midrule
\multicolumn{14}{c}{\textit{PopQA}} \\
\midrule
\multirow{2}{*}{Qwen-4B}
 & FDR & .00 & .01 & .13 & .16 & .16 & .16 & .16 & .16 & .16 & .16 & .16 & .16 \\
 & Cov. & .00 & .01 & .91 & 1.0 & 1.0 & 1.0 & 1.0 & 1.0 & 1.0 & 1.0 & 1.0 & 1.0 \\
\multirow{2}{*}{Qwen-8B}
 & FDR & .00 & .09 & .12 & .12 & .12 & .12 & .12 & .12 & .12 & .12 & .12 & .12 \\
 & Cov. & .00 & .87 & 1.0 & 1.0 & 1.0 & 1.0 & 1.0 & 1.0 & 1.0 & 1.0 & 1.0 & 1.0 \\
\multirow{2}{*}{Qwen-14B}
 & FDR & .02 & .09 & .12 & .12 & .12 & .12 & .12 & .12 & .12 & .12 & .12 & .12 \\
 & Cov. & .19 & .89 & 1.0 & 1.0 & 1.0 & 1.0 & 1.0 & 1.0 & 1.0 & 1.0 & 1.0 & 1.0 \\
\multirow{2}{*}{Llama-8B}
 & FDR & .00 & .09 & .12 & .13 & .13 & .13 & .13 & .13 & .13 & .13 & .13 & .13 \\
 & Cov. & .02 & .88 & 1.0 & 1.0 & 1.0 & 1.0 & 1.0 & 1.0 & 1.0 & 1.0 & 1.0 & 1.0 \\
\midrule
\multicolumn{14}{c}{\textit{HotpotQA}} \\
\midrule
\multirow{2}{*}{Qwen-4B}
 & FDR & .00 & .05 & .10 & .15 & .21 & .27 & .31 & .31 & .31 & .31 & .31 & .31 \\
 & Cov. & .00 & .04 & .09 & .14 & .30 & .58 & .86 & .86 & .86 & .86 & .86 & .86 \\
\multirow{2}{*}{Qwen-8B}
 & FDR & .00 & .05 & .13 & .18 & .23 & .24 & .24 & .24 & .24 & .24 & .24 & .24 \\
 & Cov. & .00 & .04 & .49 & .74 & .96 & 1.0 & 1.0 & 1.0 & 1.0 & 1.0 & 1.0 & 1.0 \\
\multirow{2}{*}{Qwen-14B}
 & FDR & .02 & .08 & .13 & .18 & .22 & .22 & .22 & .22 & .22 & .22 & .22 & .22 \\
 & Cov. & .02 & .26 & .64 & .86 & 1.0 & 1.0 & 1.0 & 1.0 & 1.0 & 1.0 & 1.0 & 1.0 \\
\multirow{2}{*}{Llama-8B}
 & FDR & .01 & .08 & .13 & .18 & .22 & .22 & .22 & .22 & .22 & .22 & .22 & .22 \\
 & Cov. & .01 & .30 & .65 & .85 & 1.0 & 1.0 & 1.0 & 1.0 & 1.0 & 1.0 & 1.0 & 1.0 \\
\bottomrule
\end{tabular}%
}
\end{table*}

\subsubsection{Coverage Curves}
Figure~\ref{fig:coverage_all} shows coverage as a function of~$\alpha$ across all 32 configurations.

\begin{figure*}[!ht]
\centering
\includegraphics[width=\textwidth]{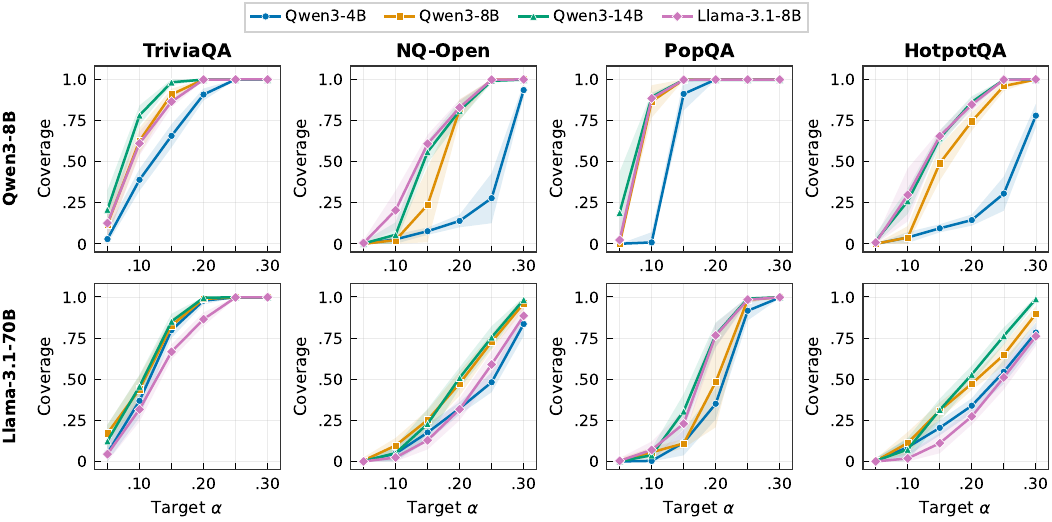}
\caption{Coverage (mean$\pm$std over 100 splits) across all 32 configurations. Rows: candidates; columns: datasets.}
\label{fig:coverage_all}
\end{figure*}

\subsubsection{Per-Dataset FDR}
Figures~\ref{fig:fdr_triviaqa}--\ref{fig:fdr_hotpotqa} provide per-dataset FDR plots for detailed examination.

\begin{figure*}[!ht]
\centering
\includegraphics[width=0.85\textwidth]{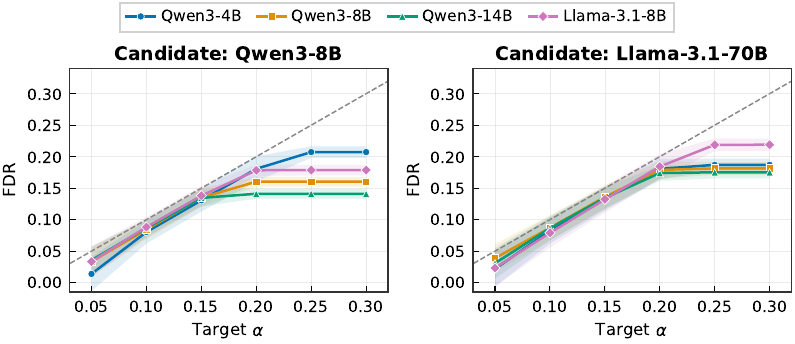}
\caption{FDR control on TriviaQA.}
\label{fig:fdr_triviaqa}
\end{figure*}

\begin{figure*}[!ht]
\centering
\includegraphics[width=0.85\textwidth]{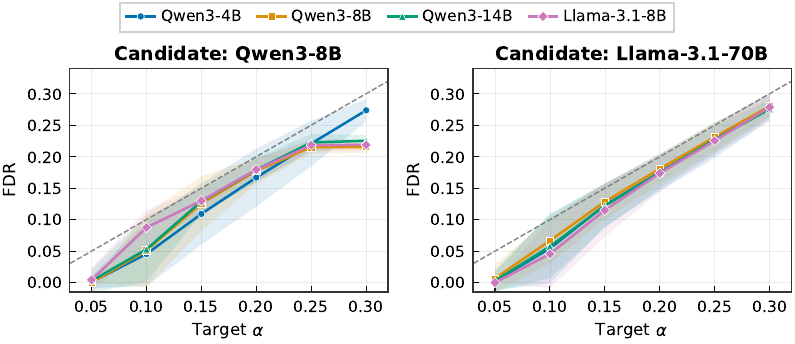}
\caption{FDR control on NQ-Open.}
\label{fig:fdr_nqopen}
\end{figure*}

\begin{figure*}[!ht]
\centering
\includegraphics[width=0.85\textwidth]{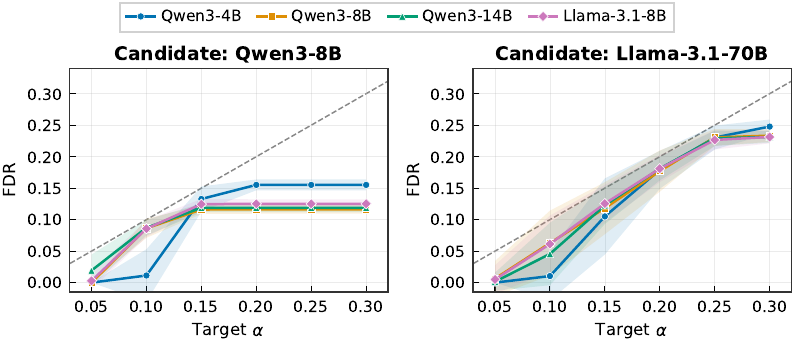}
\caption{FDR control on PopQA.}
\label{fig:fdr_popqa}
\end{figure*}

\begin{figure*}[!ht]
\centering
\includegraphics[width=0.85\textwidth]{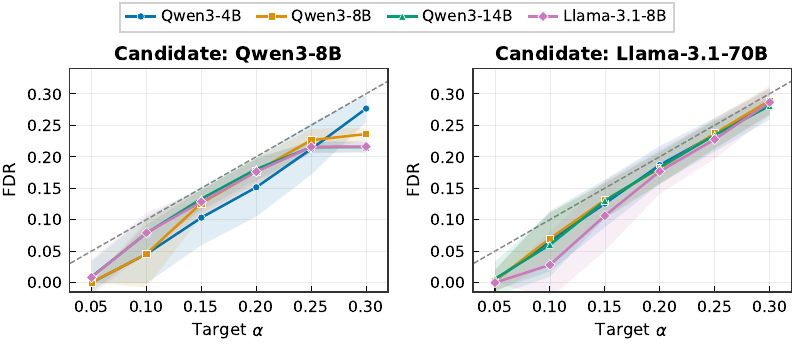}
\caption{FDR control on HotpotQA.}
\label{fig:fdr_hotpotqa}
\end{figure*}

\section{Ablations and Robustness}
\label{app:ablations}
We evaluate the framework across several dimensions. This includes pipeline design choices, the calibration procedure, the uncertainty score, and the choice of no-retrieval baseline. In all cases, the finite-sample FDR guarantee is preserved, with conservative choices reducing coverage rather than compromising risk control.

\subsection{Sensitivity Analyses}
\label{sec:sensitivity}
\label{app:calibsens}

\paragraph{Correctness criterion (admission function).}
Table~\ref{tab:admission_sensitivity} compares three admission functions across three risk levels on a representative pair (Qwen3-8B candidate, Qwen3-14B judge). The LLM evaluator consistently achieves the highest coverage, with the gap most pronounced on knowledge-intensive benchmarks: on NQ-Open at $\alpha\!=\!0.15$, it reaches 56\% while both exact match and token~F1 remain at 0\%. This gap arises because string-matching criteria penalize semantically correct but surface-different answers, inflating the judge's apparent error rate. FDR control holds under all three criteria.

\begin{table*}[h]
\centering
\caption{Sensitivity to admission function. Candidate: Qwen3-8B, Judge: Qwen3-14B, UQ: PE (white-box). FDR and coverage at $\alpha \in \{.15, .20, .25\}$.}
\label{tab:admission_sensitivity}
\resizebox{\textwidth}{!}{%
\footnotesize
\begin{tabular}{l|ccc|ccc|ccc|ccc|ccc|ccc}
\toprule
 & \multicolumn{6}{c|}{Exact Match} & \multicolumn{6}{c|}{Token F1} & \multicolumn{6}{c}{LLM Evaluator} \\
 & \multicolumn{3}{c|}{FDR} & \multicolumn{3}{c|}{Coverage} & \multicolumn{3}{c|}{FDR} & \multicolumn{3}{c|}{Coverage} & \multicolumn{3}{c|}{FDR} & \multicolumn{3}{c}{Coverage} \\
Dataset & .15 & .20 & .25 & .15 & .20 & .25 & .15 & .20 & .25 & .15 & .20 & .25 & .15 & .20 & .25 & .15 & .20 & .25 \\
\midrule
TriviaQA & .13 & .14 & .14 & .97 & 1.0 & 1.0 & .13 & .18 & .18 & .72 & .98 & 1.0 & .13 & .14 & .14 & .98 & 1.0 & 1.0 \\
NQ-Open  & .00 & .05 & .22 & .00 & .11 & .69 & .00 & .00 & .18 & .00 & .00 & .45 & .13 & .18 & .22 & .56 & .81 & .99 \\
PopQA    & .12 & .12 & .12 & 1.0 & 1.0 & 1.0 & .13 & .13 & .13 & .99 & 1.0 & 1.0 & .12 & .12 & .12 & 1.0 & 1.0 & 1.0 \\
HotpotQA & .04 & .18 & .23 & .03 & .57 & .86 & .04 & .17 & .23 & .04 & .47 & .83 & .13 & .18 & .22 & .64 & .86 & 1.0 \\
\bottomrule
\end{tabular}%
}
\end{table*}

\paragraph{Retrieval depth.}
Table~\ref{tab:retrieval_depth} ablates the number of retrieved passages ($k \in \{3, 6, 9\}$) on a representative pair (Llama-70B candidate, Qwen3-8B judge). Coverage and FDR are stable across retrieval depths, with differences within 1--2 percentage points. $k\!=\!3$ already provides sufficient evidence.

\begin{table}[h]
\centering
\small
\caption{Retrieval depth ablation at $\alpha\!=\!0.20$. Candidate: Llama-70B, Judge: Qwen3-8B, LLM evaluator.}
\label{tab:retrieval_depth}
\setlength{\tabcolsep}{11.5pt}
\begin{tabular}{l|cc|cc|cc}
\toprule
 & \multicolumn{2}{c|}{$k=3$} & \multicolumn{2}{c|}{$k=6$} & \multicolumn{2}{c}{$k=9$} \\
Dataset & FDR & Cov. & FDR & Cov. & FDR & Cov. \\
\midrule
TriviaQA & .18 & .99 & .17 & 1.0 & .17 & .99 \\
NQ-Open  & .18 & .47 & .18 & .46 & .18 & .46 \\
PopQA    & .18 & .48 & .18 & .41 & .17 & .36 \\
HotpotQA & .18 & .47 & .18 & .46 & .18 & .46 \\
\bottomrule
\end{tabular}
\end{table}

\paragraph{Confidence level $\delta$.} Table~\ref{tab:delta} varies the confidence level. Tightening $\delta$ from $0.05$ to $0.01$ trades coverage for confidence, with the cost concentrated at low~$\alpha$ and on the harder dataset; FDR stays controlled throughout.

\begin{table}[h]
\centering
\small
\caption{Sensitivity to the confidence level $\delta$ (coverage). Candidate: Qwen3-8B, Judge: Qwen3-8B, LLM evaluator, $k\!=\!3$, PE. FDR is controlled at the target~$\alpha$ under both settings.}
\label{tab:delta}
\begin{tabular*}{\textwidth}{@{\extracolsep{\fill}}c|ccccc|ccccc@{}}
\toprule
 & \multicolumn{5}{c|}{TriviaQA} & \multicolumn{5}{c}{HotpotQA} \\
$\delta~\backslash~\alpha$ & .05 & .10 & .15 & .20 & .25 & .05 & .10 & .15 & .20 & .25 \\
\midrule
0.05 & .12 & .63 & .91 & 1.0 & 1.0 & .00 & .04 & .49 & .74 & .96 \\
0.01 & .04 & .53 & .87 & 1.0 & 1.0 & .00 & .01 & .37 & .69 & .93 \\
\bottomrule
\end{tabular*}
\end{table}

\paragraph{Calibration-set size.} Table~\ref{tab:calibsize} sweeps the calibration fraction $n_{\mathrm{cal}}/n$ over the 2{,}000 items per dataset. Coverage on the easier TriviaQA is stable across the range, while on HotpotQA it degrades gracefully as the calibration set shrinks; FDR is controlled at every size.

\begin{table}[h]
\centering
\small
\caption{Sensitivity to calibration-set size (coverage). $n_{\mathrm{cal}}/n$ is the fraction of the 2{,}000 items used for calibration. Candidate: Qwen3-8B, Judge: Qwen3-8B, LLM evaluator, $k\!=\!3$, PE. FDR is controlled at the target~$\alpha$ for every size.}
\label{tab:calibsize}
\begin{tabular*}{\textwidth}{@{\extracolsep{\fill}}c|ccccc|ccccc@{}}
\toprule
 & \multicolumn{5}{c|}{TriviaQA} & \multicolumn{5}{c}{HotpotQA} \\
$n_{\mathrm{cal}}/n~\backslash~\alpha$ & .05 & .10 & .15 & .20 & .25 & .05 & .10 & .15 & .20 & .25 \\
\midrule
0.10 & .02 & .38 & .81 & .96 & .99 & .00 & .03 & .25 & .55 & .83 \\
0.25 & .08 & .54 & .89 & .99 & 1.0 & .00 & .04 & .41 & .70 & .92 \\
0.50 & .12 & .63 & .91 & 1.0 & 1.0 & .00 & .04 & .49 & .74 & .96 \\
0.75 & .15 & .66 & .92 & 1.0 & 1.0 & .00 & .04 & .53 & .77 & .98 \\
\bottomrule
\end{tabular*}
\end{table}

\subsection{Alternative Uncertainty Estimators}
\label{app:uqmenu}
The FDR guarantee depends only on the binary correctness indicator. As a result, the choice of uncertainty score affects coverage but not whether the target risk level is controlled. In addition to the white-box predictive entropy (PE) used in the main paper, we evaluate two supervised uncertainty scores derived from the judge's last-layer hidden state at the verdict position. Following~\citet{NEURIPS2024_9c20f16b}, we consider a small MLP (\emph{Probe}) and a \emph{LoRA+Prompt} adapter fine-tuned to predict verdict correctness.

Both models are trained using 5-fold cross-validation on the calibration set so that every calibration instance receives an out-of-sample uncertainty score before conformal calibration. Table~\ref{tab:uqmenu} reports the resulting coverage. All three uncertainty scores satisfy the target FDR level across every setting.

LoRA+Prompt consistently achieves the highest coverage, particularly on datasets where PE is less informative. For example, coverage increases from $0.04$ to $0.44$ on HotpotQA at $\alpha\!=\!0.10$ and from $0.00$ to $0.71$ on PopQA at $\alpha\!=\!0.05$. The Probe also improves over PE on more challenging datasets at moderate risk levels (e.g., NQ-Open at $\alpha\!=\!0.15$, $0.24 \rightarrow 0.42$), although it performs worse at the lowest risk levels.

\begin{table}[!ht]
\centering
\small
\caption{Coverage under three uncertainty scores: the white-box predictive entropy (PE) used in the main paper and two supervised scores, a \emph{Probe} and a \emph{LoRA+Prompt} adapter on the judge's hidden states. Candidate: Qwen3-8B, Judge: Qwen3-8B, LLM evaluator, $k\!=\!3$. FDR is controlled at the target~$\alpha$ for all three.}
\label{tab:uqmenu}
\setlength{\tabcolsep}{4pt}
\begin{tabular*}{\textwidth}{@{\extracolsep{\fill}}c|ccc|ccc|ccc|ccc@{}}
\toprule
 & \multicolumn{3}{c|}{TriviaQA} & \multicolumn{3}{c|}{NQ-Open} & \multicolumn{3}{c|}{PopQA} & \multicolumn{3}{c}{HotpotQA} \\
$\alpha$ & PE & Probe & LoRA & PE & Probe & LoRA & PE & Probe & LoRA & PE & Probe & LoRA \\
\midrule
.05 & .12 & .00 & .00 & .00 & .00 & .02 & .00 & .31 & .71 & .00 & .00 & .02 \\
.10 & .63 & .14 & .70 & .02 & .07 & .28 & .87 & .82 & .94 & .04 & .02 & .44 \\
.15 & .91 & .90 & .94 & .24 & .42 & .64 & 1.0 & 1.0 & 1.0 & .49 & .47 & .76 \\
.20 & 1.0 & 1.0 & 1.0 & .82 & .79 & .86 & 1.0 & 1.0 & 1.0 & .74 & .78 & .95 \\
.25 & 1.0 & 1.0 & 1.0 & 1.0 & 1.0 & 1.0 & 1.0 & 1.0 & 1.0 & .96 & .99 & 1.0 \\
\bottomrule
\end{tabular*}
\end{table}

\subsection{Baseline Comparison}
\label{app:baselines}
To isolate the contribution of retrieved evidence, Table~\ref{tab:baselines} compares our joint two-mode policy against three controls that keep the identical calibration but change what the accepted verdict is based on. \emph{Direct} accepts only Mode~1 (no retrieval). \emph{Self-eval} adds a second Mode~1 pass in which the judge is asked whether its own initial verdict is correct, using no new evidence. \emph{Empty} runs the Mode~2 prompt with an empty evidence slot. FDR is controlled for every variant. The joint policy dominates all three at nearly every operating point, and the gap is largest on the knowledge-intensive HotpotQA ($0.74$ versus at most $0.07$ for the no-retrieval controls at $\alpha\!=\!0.20$). A second evaluation pass without new evidence (Self-eval) barely helps, confirming that the coverage gains come from the retrieved evidence rather than from simply re-querying the judge.

\begin{table}[!ht]
\centering
\small
\caption{Baseline comparison (coverage). Candidate: Qwen3-8B, Judge: Qwen3-8B, LLM evaluator, $k\!=\!3$, PE. FDR is controlled at the target~$\alpha$ for every variant.}
\label{tab:baselines}
\begin{tabular*}{\textwidth}{@{\extracolsep{\fill}}l|ccccc|ccccc@{}}
\toprule
 & \multicolumn{5}{c|}{TriviaQA} & \multicolumn{5}{c}{HotpotQA} \\
Variant~$\backslash$~$\alpha$ & .05 & .10 & .15 & .20 & .25 & .05 & .10 & .15 & .20 & .25 \\
\midrule
Direct       & .03 & .28 & .47 & .59 & .74 & .00 & .00 & .01 & .05 & .17 \\
Self-eval    & .02 & .32 & .49 & .60 & .75 & .00 & .00 & .02 & .07 & .19 \\
Empty        & .04 & .38 & .63 & .83 & .99 & .00 & .09 & .32 & .57 & .84 \\
\midrule
Joint (ours) & .12 & .63 & .91 & 1.0 & 1.0 & .00 & .04 & .49 & .74 & .96 \\
\bottomrule
\end{tabular*}
\end{table}

\subsection{Conservative Joint Calibration}
\label{app:bonferroni}
We selected the deployed threshold pair $(\hat{t}_1, \hat{t}_2)$ using the calibration data rather than fixing them beforehand. Specifically, we choose the pair that maximizes coverage while satisfying $\hat{R}^{\mathrm{upper}}_{1-\delta} \leq \alpha$ (Eq.~\ref{eq:that_route}). Although the Clopper--Pearson bound holds for each candidate pair on its own, certifying the selected pair requires accounting for this search. The conservative variant addresses this directly by evaluating all $|\mathcal{T}_1|\,|\mathcal{T}_2|$ candidate pairs using the corrected confidence level $\delta'=\delta/(|\mathcal{T}_1|\,|\mathcal{T}_2|)$. By a union-bound argument, the guarantee then holds simultaneously for every candidate pair, implying that the selected pair satisfies $\Pr(R(\hat{t}_1,\hat{t}_2)\leq\alpha)\geq1-\delta$. In practice, however, we find that the simpler uncorrected selection already provides effective risk control (see Section~\ref{sec:results}). For this reason, we use the coverage-maximizing thresholds in our main experiments, while treating the conservative procedure as a theoretically valid alternative.

One subtlety is that the grid is data-dependent, since $\mathcal{T}_1$ and $\mathcal{T}_2$ are the observed uncertainty values, making both the candidate pairs and their count random. Because each mode contributes at most $N$ distinct thresholds, replacing $|\mathcal{T}_1|\,|\mathcal{T}_2|$ with the data-independent bound $N^2$ (equivalently, correcting at level $\delta/N^2$) makes the union bound hold over a fixed family and leaves the conclusion unchanged at negligible cost; pre-specifying the grid removes the dependence altogether.

Table~\ref{tab:bonferroni} compares the coverage of the main-paper calibration (\emph{Pointwise}, per-pair level $\delta$) with this \emph{Conservative} variant (grid-corrected level $\delta'$). FDR is controlled under both. The correction is nearly free on the easier datasets (TriviaQA, PopQA) but costs substantial coverage on the harder ones (NQ-Open, HotpotQA), where the empirical FDR sits close to~$\alpha$. Bonferroni is a worst-case union bound and adjacent grid thresholds select almost the same instances, so it is very loose. The truly attainable jointly-valid coverage lies between the two rows which is much closer to the pointwise figures.

\begin{table}[h]
\centering
\small
\caption{Conservative joint calibration (coverage). Per dataset, \emph{Point.}\ and \emph{Cons.}\ denote the pointwise and conservative coverage, and $\Delta = \text{Cons.} - \text{Point.}$ (computed on unrounded values). Candidate: Qwen3-8B, Judge: Qwen3-8B, LLM evaluator, $k\!=\!3$, PE. FDR is controlled at the target~$\alpha$ under both variants.}
\label{tab:bonferroni}
\setlength{\tabcolsep}{4pt}
\begin{tabular*}{\textwidth}{@{\extracolsep{\fill}}c|ccc|ccc|ccc|ccc@{}}
\toprule
 & \multicolumn{3}{c|}{TriviaQA} & \multicolumn{3}{c|}{NQ-Open} & \multicolumn{3}{c|}{PopQA} & \multicolumn{3}{c}{HotpotQA} \\
$\alpha$ & Point. & Cons. & $\Delta$ & Point. & Cons. & $\Delta$ & Point. & Cons. & $\Delta$ & Point. & Cons. & $\Delta$ \\
\midrule
.05 & .12 & .00 & $-$.12 & .00 & .00 & .00 & .00 & .00 & .00 & .00 & .00 & .00 \\
.10 & .63 & .01 & $-$.62 & .02 & .00 & $-$.02 & .87 & .00 & $-$.87 & .04 & .00 & $-$.04 \\
.15 & .91 & .64 & $-$.27 & .24 & .00 & $-$.24 & 1.0 & .93 & $-$.07 & .49 & .00 & $-$.49 \\
.20 & 1.0 & .93 & $-$.07 & .82 & .05 & $-$.76 & 1.0 & 1.0 & .00 & .74 & .30 & $-$.45 \\
.25 & 1.0 & 1.0 & .00 & 1.0 & .80 & $-$.20 & 1.0 & 1.0 & .00 & .96 & .72 & $-$.24 \\
\bottomrule
\end{tabular*}
\end{table}

\section{Analysis}
\label{app:analysis}
This section moves beyond aggregate metrics to analyze how retrieval improves evaluation, including its impact on judge uncertainty, examples where it corrects incorrect verdicts, and the residual failure modes.

\subsection{Effect of Retrieval on Uncertainty}
\label{app:deltau}
Retrieval does not lower the judge's uncertainty uniformly. Table~\ref{tab:deltau} reports, per dataset, the fraction of items on which Mode~2 uncertainty is higher than Mode~1 ($U_2 > U_1$) versus lower, with the mean signed change $\Delta U = U_2 - U_1$ within each group. On most items retrieval reduces uncertainty, but on 14--29\% it raises it, concentrated on the multi-hop HotpotQA and NQ-Open, where retrieved snippets are only partially relevant or sources disagree. These are exactly the items the second threshold~$\hat{t}_2$ routes to abstention when retrieval fails to resolve the judge's uncertainty.

\begin{table}[!ht]
\centering
\small
\caption{Effect of retrieval on judge uncertainty. Results are obtained with Qwen3-8B as both candidate and judge, the LLM evaluator, and $k\!=\!3$ retrieval using predictive entropy (PE). The ``raised'' and ``lowered'' columns report the fraction of instances where retrieval increases or decreases uncertainty, respectively. Mean~$\Delta U$ is averaged within each group.}
\label{tab:deltau}
\begin{tabular*}{\textwidth}{@{\extracolsep{\fill}}l|cccc@{}}
\toprule
 & TriviaQA & NQ-Open & PopQA & HotpotQA \\
\midrule
\% raised ($U_2\!>\!U_1$)  & 20.1 & 27.9 & 14.3 & 29.1 \\
\quad mean $\Delta U$      & $+0.10$ & $+0.11$ & $+0.10$ & $+0.12$ \\
\% lowered ($U_2\!<\!U_1$) & 68.0 & 68.8 & 84.6 & 69.7 \\
\quad mean $\Delta U$      & $-0.07$ & $-0.09$ & $-0.10$ & $-0.12$ \\
\bottomrule
\end{tabular*}
\end{table}

\subsection{Qualitative Examples}
\label{app:qualitative}

Tables~\ref{tab:qualitative} and~\ref{tab:evidence_examples} show representative cases where the judge issues an incorrect verdict in Mode~1 but corrects it after retrieval in Mode~2. In each case, Mode~1 confidently hallucinated while retrieved evidence grounded the judge in verifiable facts.

\begin{table*}[!ht]
\centering
\caption{Examples where retrieval corrects the judge's verdict. Candidate: Qwen3-8B, Judge: Qwen3-14B.}
\label{tab:qualitative}
\resizebox{\textwidth}{!}{%
\small
\begin{tabular}{p{3.5cm}p{2cm}p{1.5cm}p{5.5cm}p{5.5cm}}
\toprule
\textbf{Question} & \textbf{Candidate} & \textbf{Gold} & \textbf{Mode 1 (Wrong)} & \textbf{Mode 2 (Corrected)} \\
\midrule
Who voices Jack Kahuna Laguna in SpongeBob SquarePants? & Steve Carell & Johnny Depp & \textit{``Steve Carell provided the voice\ldots This is a well-documented fact.''} \newline \textbf{Verdict: True} & \textit{``The web search results indicate that Johnny Depp, not Steve Carell, provided the voice\ldots''} \newline \textbf{Verdict: False} \\
\midrule
Which North African country's flag is just the colour green? & Chad & Libya & \textit{``The flag of Chad is indeed a single solid green rectangle\ldots''} \newline \textbf{Verdict: True} & \textit{``The search results indicate that Libya's flag was a single green field\ldots''} \newline \textbf{Verdict: False} \\
\midrule
`The Great American Chocolate Bar' is better known as? & Milky Way & Hershey & \textit{``The Great American Chocolate Bar is indeed known as the Milky Way bar\ldots''} \newline \textbf{Verdict: True} & \textit{``The search results indicate that it is a nickname for the Hershey Milk Chocolate Bar\ldots''} \newline \textbf{Verdict: False} \\
\bottomrule
\end{tabular}%
}
\end{table*}

\begin{table*}[!ht]
\centering
\caption{Retrieved evidence ($k\!=\!3$) for the examples in Table~\ref{tab:qualitative}. The first result in each case directly contains the correct answer.}
\label{tab:evidence_examples}
\resizebox{\textwidth}{!}{%
\small
\begin{tabular}{p{3.5cm}cp{12cm}}
\toprule
\textbf{Question} & \textbf{\#} & \textbf{Retrieved Snippet} \\
\midrule
\multirow{3}{3.5cm}{Who voices Jack Kahuna Laguna in SpongeBob?}
 & 1 & \textit{``\ldots American actor and musician Johnny Depp guest starred in the episode as the voice of Jack Kahuna Laguna, a surf guru\ldots''} --- Wikipedia \\
 & 2 & \textit{``Jack Kahuna Laguna has a striking resemblance to his voice actor, Johnny Depp.''} --- Fandom \\
 & 3 & \textit{``\ldots it guest starred Johnny Depp as the legendary surfer, Jack Kahuna Laguna.''} --- IMDb \\
\midrule
\multirow{3}{3.5cm}{Which N.\ African country's flag is just green?}
 & 1 & \textit{``The flag of the Libyan Arab Jamahiriya\ldots consisted of a simple green field. It was the only national flag\ldots with only one colour.''} --- Wikipedia \\
 & 2 & \textit{``Republic of Ireland and the Ivory Coast the flags are the same\ldots''} --- Facebook \\
 & 3 & \textit{``Green is an important colour in Islam\ldots''} --- Reddit \\
\midrule
\multirow{3}{3.5cm}{`The Great American Chocolate Bar' is better known as?}
 & 1 & \textit{``The classic HERSHEY'S Milk Chocolate bar has been referred to as `The Great American Chocolate Bar.'\,''} --- Hershey \\
 & 2 & \textit{``Hershey refers to it as `The Great American Chocolate Bar'. First sold in 1900.''} --- Wikipedia \\
 & 3 & \textit{``The Great American Chocolate Bar campaign served the company well\ldots''} --- Hershey Archives \\
\bottomrule
\end{tabular}%
}
\end{table*}

\subsection{Failure Modes}
\label{app:failuremodes}
When retrieval does not fix an incorrect Mode~1 verdict, the residual errors fall into four qualitative classes. (i)~\emph{Off-topic retrieval}: an ambiguous or under-specified question yields search queries that miss the relevant evidence. (ii)~\emph{Insufficient evidence}: the retrieved snippets are only partially relevant and lack the specific fact needed to verify the candidate. (iii)~\emph{Judge reasoning error}: the snippets contain the relevant evidence, but the judge still returns a wrong verdict. For example defaulting to \texttt{False} in the absence of an exact lexical match. (iv)~\emph{Admission function disagreement}: the candidate is semantically close to the gold answer but the admission function scores it as different. The two-threshold policy sends many such cases to abstention rather than accepting them, which is why coverage rather than FDR absorbs the difficulty on the hardest datasets.


\end{document}